\documentclass{article}

\usepackage{microtype}
\usepackage{graphicx}
\usepackage{subcaption}
\usepackage{booktabs} %

\usepackage{hyperref}

\let\OriginalAddContentsLine\addcontentsline

\usepackage[accepted]{icml2026}

\usepackage{amsmath}
\usepackage{amssymb}
\usepackage{mathtools}
\usepackage{amsthm}

\usepackage[capitalize,noabbrev]{cleveref}

\usepackage{pifont}
\usepackage{multirow}
\usepackage{multicol}
\usepackage{makecell}
\usepackage{xcolor}
\usepackage[table]{xcolor}

\newcommand{\ie}{\textit{i.e.}\ }
\usepackage{xspace}
\newcommand{\method}{FROG\xspace}
\usepackage{subcaption}
\usepackage{enumitem}

\theoremstyle{plain}
\newtheorem{theorem}{Theorem}[section]
\newtheorem{proposition}[theorem]{Proposition}
\newtheorem{lemma}[theorem]{Lemma}

\theoremstyle{definition}
\newtheorem{definition}[theorem]{Definition}

\theoremstyle{remark}

\icmltitlerunning{Is Fixing Schema Graphs Necessary? Full-Resolution Graph Structure Learning for Relational Deep Learning}

\begin{document}

\twocolumn[
  \icmltitle{Is Fixing Schema Graphs Necessary?\\Full-Resolution Graph Structure Learning for Relational Deep Learning}

  \icmlsetsymbol{corres}{$\dagger$}

  \begin{icmlauthorlist}
    \icmlauthor{Yi Huang}{buaa}
    \icmlauthor{Qingyun Sun}{buaa,corres}
    \icmlauthor{Jia Li}{buaa}
    \icmlauthor{Xingcheng Fu}{gxnu}
    \icmlauthor{Jianxin Li}{buaa}
  \end{icmlauthorlist}

  \icmlaffiliation{buaa}{SKLCCSE, School of Computer Science and Engineering, Beihang University, Beijing, China}
  \icmlaffiliation{gxnu}{Key Lab of Education Blockchain and Intelligent Technology, Ministry of Education, Guangxi Normal University, Guilin, China}

  \icmlcorrespondingauthor{Qingyun Sun}{sunqy@buaa.edu.cn}

  \icmlkeywords{Graph Neural Networks, Message Passing, Relational Deep Learning, Functional Dependency}

  \vskip 0.3in
]

\printAffiliationsAndNotice{}  %

\begin{abstract}
Relational prediction tasks are fundamental in many real-world applications, where data are naturally stored in relational databases (RDBs). 
Relational Deep Learning (RDL) addresses this problem by modeling RDBs as graphs and applying graph neural networks (GNNs) for end-to-end learning. 
However, the full-resolution property is commonly adopted as a design principle in graph construction for RDBs to preserve relational semantics, which leads most existing methods to rely on \emph{fixed} graph structures.
In this paper, we propose \textbf{\method}, a \textit{\textbf{F}ull-\textbf{R}esolution  and \textbf{O}ptimizable \textbf{G}raph Structure Learning} framework for RDL that formulates relational structure learning as a learnable table role modeling problem, allowing tables to contribute as nodes and edges in message passing.
We further design role-driven message passing mechanisms to capture relational semantics, enabling joint optimization of graph structure and GNN representations.
To ensure semantic consistency, we introduce functional dependency constraints that regularize representations across table and entity levels. 
Extensive experiments demonstrate that our method outperforms existing approaches and reveal how table roles impact downstream tasks, offering new insights into graph construction for RDL\footnote{Code is available at: \url{https://github.com/RingBDStack/FROG}}.
\end{abstract}

\section{Introduction}
\label{sec-intro}

Many real-world applications organize data as relational databases (RDBs)~\cite{rdb-book}, where multiple entities are connected through explicit relations to describe complex systems.
Because of this structured organization, such relational data is large in scale and rich in structural information, making it central to practical decision-making.
Accordingly, a wide range of predictive tasks are defined on relational databases~\cite{relbench, wang20244dbinfer},
such as forecasting users' future reviewing behavior based on \texttt{customer}, \texttt{product}, and \texttt{review} tables. 

\begin{figure}[t]
    \centering
    \includegraphics[width=1.\linewidth]{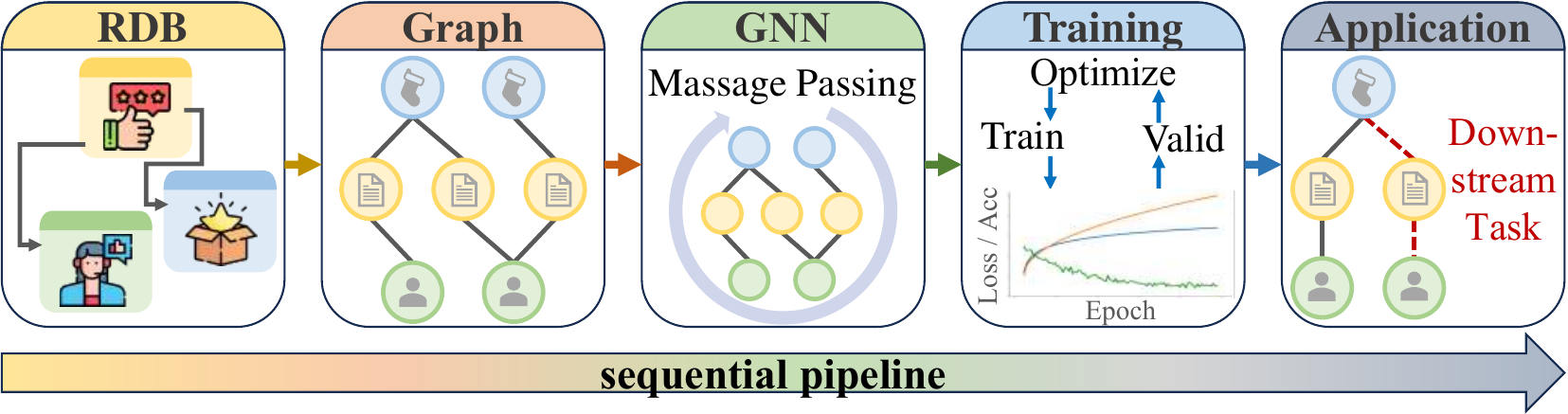}
    \caption{Pipeline of Typical RDL Method}
    \label{fig:pipeline}
    \vskip -1em
\end{figure}

Traditional solutions typically flatten relational data into a single table, followed by manual feature engineering using tabular models~\cite{XGboost,kaggle}. 
While effective in simple settings, this approach is labor-intensive and often discards relational structure, resulting in weaker predictive signals.
These limitations have motivated Relational Deep Learning (RDL)~\cite{fey2024position}, which aims to perform end-to-end learning directly over RDBs by jointly modeling the relations between entities and learning their dependencies across multiple tables.
Most existing RDL methods follow a sequential pipeline: a Relational Entity Graph (REG) is first constructed based on a \emph{fixed} Schema Graph, 
and a graph neural network (GNN)~\cite{gnn, gnn2} is then trained on top of this for end-to-end learning, as illustrated in Figure~\ref{fig:pipeline}. 
However, this raises a natural question: 
\begin{center}
\vspace{-0.5em}
    \emph{Is Fixing Schema Graphs Necessary?} 
    —\;Maybe Not.
\vspace{-0.5em}
\end{center}

The way a graph is constructed directly constrains the information accessible during message passing~\cite{zhong2023hierarchical, FPA, BQN}. As a result, relations that are implicit, high-order, or task-specific may be difficult to expose, limiting the effectiveness of downstream GNN learning (as shown in Figure~\ref{GSL_example}). 
Consequently, the graph should \emph{not merely serve as a representation of the RDB, but should evolve together with GNNs}.

\begin{figure}[htbp]
    \centering
    \includegraphics[width=\linewidth]{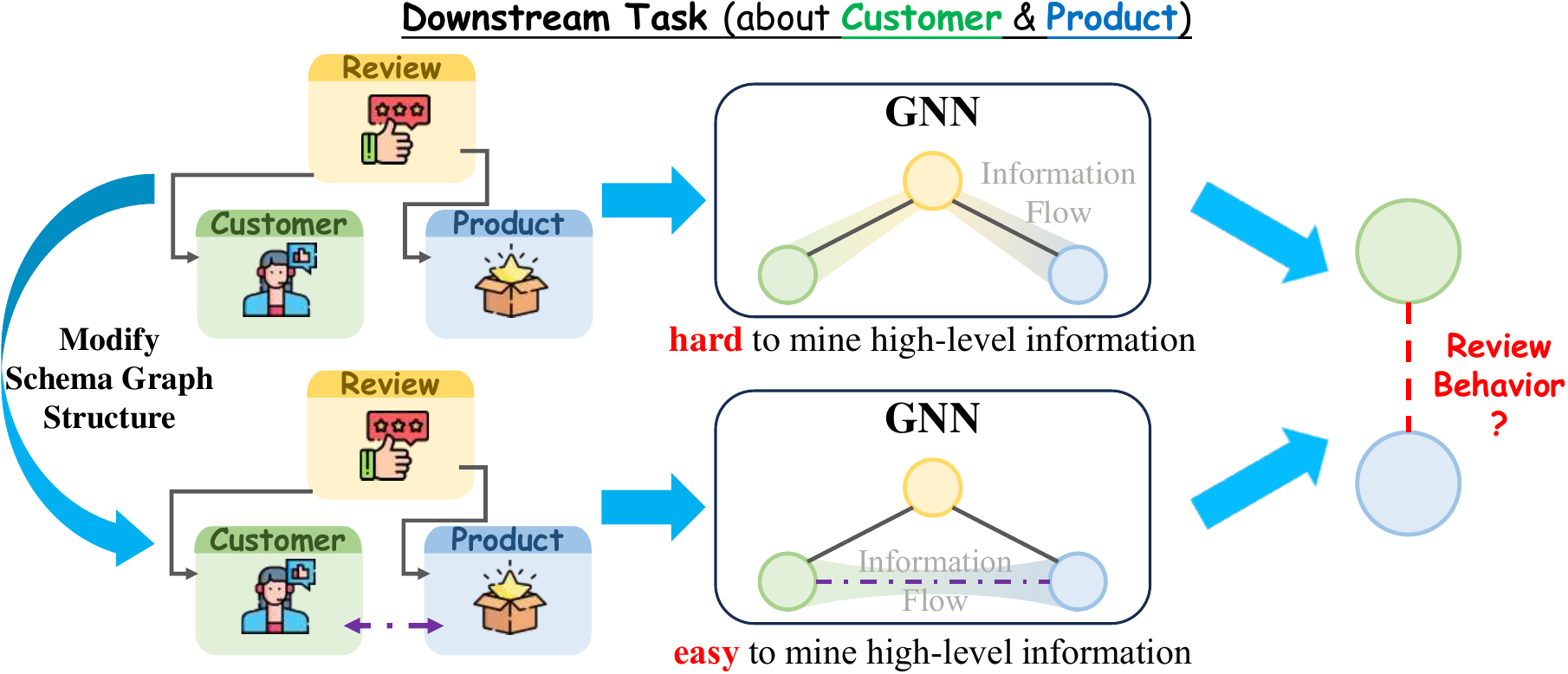}
    \caption{Illustration of the impact of graph structure}
    \label{GSL_example}
    \vspace{-1em}
\end{figure}

However, the graph is not the primary object of interest; rather, the RDB itself constitutes the core input, and the graph serves only as an alternative representation,which is expected to preserve two essential relational properties: \emph{full-resolution} and \emph{functional dependencies}.
\ding{172} The \emph{full-resolution} property was first introduced by \citet{fey2024position}, 
which requires each entity and foreign key link in the RDB to be faithfully encoded in the graph based on the schema, ensuring that \emph{no relational information is lost}.
Under this constraint, traditional graph structure learning (GSL) approaches~\cite{GSL-survey} that rely on pruning or adding edges are incompatible in RDL, as such operations compromise the full-resolution property (Sec. \ref{sec3-3}).
\ding{173} Moreover, maintaining the integrity of relational data requires respecting key constraints, most notably \emph{Functional Dependencies}, 
where one entity uniquely determines another across tables (e.g., $\texttt{review}_1\to \texttt{user}_1$ but $\texttt{review}_1\nrightarrow \texttt{user}_2$).
Together, this calls for \emph{rethinking graph structure optimization problem in RDL from a new perspective}.

In this context, RDB2G~\cite{RDB2G} takes an early step toward graph modeling for RDL. 
It models relational tables as edges and studies how different schema graph constructions affect downstream GNN performance by exhaustively enumerating candidate graphs. 
While insightful, RDB2G focuses on discrete \emph{role selection} and empirical comparison, without addressing \emph{how table roles (or schema graph) can be optimized} in an end-to-end manner.

Therefore, developing a \emph{full-resolution} and \emph{optimizable} GSL method that maintains \emph{functional dependencies} to capture relational structures remains an important open problem. 
Motivated by RDB2G, We follow the idea of modeling relational tables not only as nodes, but also as edges.
However, this approach still faces the following challenges:
\vspace{-1em}
\begin{itemize}
\setlength{\itemsep}{0pt}
    \item How to guarantee that the graph construction satisfies the full-resolution property in Def. \ref{def-FR}? ($\triangleright$ Sec. \ref{sec4-1})
    \item {How to optimize table roles in a learnable manner, rather than discrete role selection?} ($\triangleright$ Sec. \ref{sec4-2})
    \item {How to model relation-driven information flow across tables with different roles?} ($\triangleright$ Sec. \ref{sec4-3})
    \item How to enable the model to maintain the ``functional dependency" during learning processes? ($\triangleright$ Sec. \ref{sec4-4})
\end{itemize}

In this work, we propose \textbf{\method}, a \textit{\textbf{F}ull-\textbf{R}esolution  and \textbf{O}ptimizable \textbf{G}raph Structure Learning} framework.
To the best of our knowledge, it is the first to introduce GSL into RDL. 
We first investigate the different effects of modeling tables as nodes or edges  while satisfying the full-resolution property, and demonstrate that modeling table-as-edge better captures long-range dependencies via mutual information. 
Then, we design relation-driven message-passing paradigms to learn diverse relational information and enable structure optimization.
Finally, we introduce functional dependency loss at both table and entity levels to encourage that FD constraints in RDBs are maintained.
Our contributions are:
\vspace{-1em}
\begin{itemize}
\setlength{\itemsep}{0pt}
    \item We conduct an in-depth analysis of the full-resolution property of RDBs from an information-theoretic perspective and propose a graph structure learning framework \method that preserves full-resolution.
    \item We optimize table roles by designing relation-driven message passing paradigms, enabling adaptive node-edge role learning for Schema Graph construction.
    \item We design the functional dependency loss to constrain the embedding space, enabling the model to respect functional dependency constraints inherent in RDBs.
    \item Extensive experiments demonstrate the effectiveness of our approach and systematically analyze the impact of table-as-node and table-as-edge representations.
\end{itemize}

\begin{figure*}[t]
    \centering
    \includegraphics[width=0.9\linewidth]{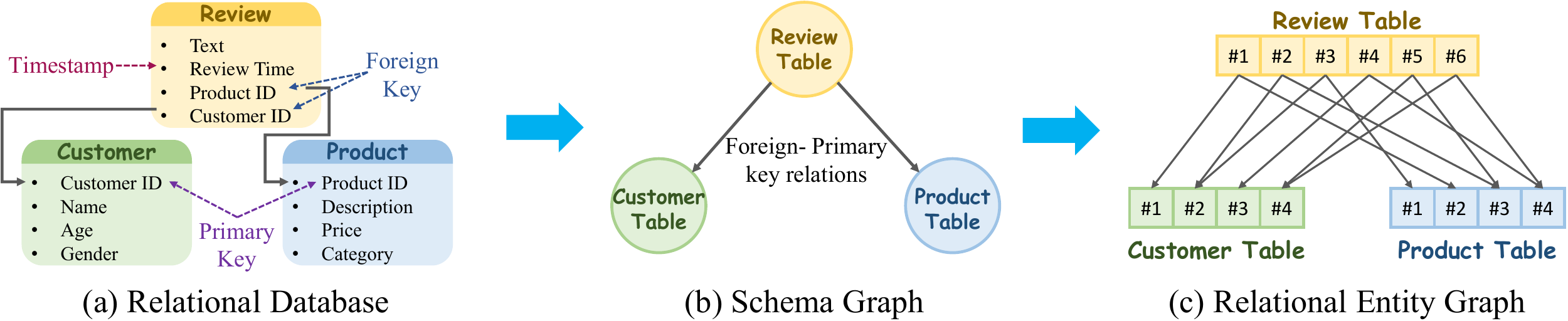}
    \caption{Overview of the core concepts of RDL}
    \label{fig:sec3}
\end{figure*}
\section{Related Work}
\subsection{Relational Deep Learning}
Relational Deep Learning (RDL), aims to solve the prediction problem on relational databases in an end-to-end manner using GNNs \cite{fey2024position,RDL-survey}.
\citet{relbench}, \citet{wang20244dbinfer} and \citet{RDB2G} have built benchmarks on RDB, establishing a framework for RDL learning. Based on this, RelGNN~\cite{relgnn} achieves information transmission of complex relationships on RDB by introducing atomic paths. RelGT~\cite{RELGT} and RGP~\cite{RGP} extract features based on graph transformer, while \citet{TVE} proposed a pre-training framework and \citet{Griffin} proposed the foundation model ``Griffin" on RDB. 
However, existing methods assume a fixed graph structure once constructed, which limits the model-graph coupling and may reduce the model's performance.

\subsection{Graph Structure Learning}
\label{sec2-GSL}
Graph Structure Learning (GSL)~\cite{jin2020graph, GSLVIB, Dg-mamba} aims to optimize both the graph structure and the model simultaneously.
It enables the model to dynamically discover the most informative connections, rather than relying solely on a predefined graph.
By mining the relationships between node pairs, the edges in the graph can be added, deleted, or modified. This is mostly achieved by modifying the adjacency matrix or Laplacian matrix of the graph \cite{GSL-survey, li2018adaptive}. 
GSL makes graphs more compatible with models and downstream tasks, achieving collaborative updates of graphs and models. 
However, current GSL methods are inherently incompatible with RDL mapping:
\ding{172} The model learns connections between node pairs rather than table relations, and may violate the functional dependencies in Def.~\ref{def_FD}.
\ding{173} Graph structure changes may \emph{break full-resolution with respect to the RDB}, violating a key property required for modeling RDBs.

\begin{figure*}[t]
    \centering
    \includegraphics[width=\linewidth]{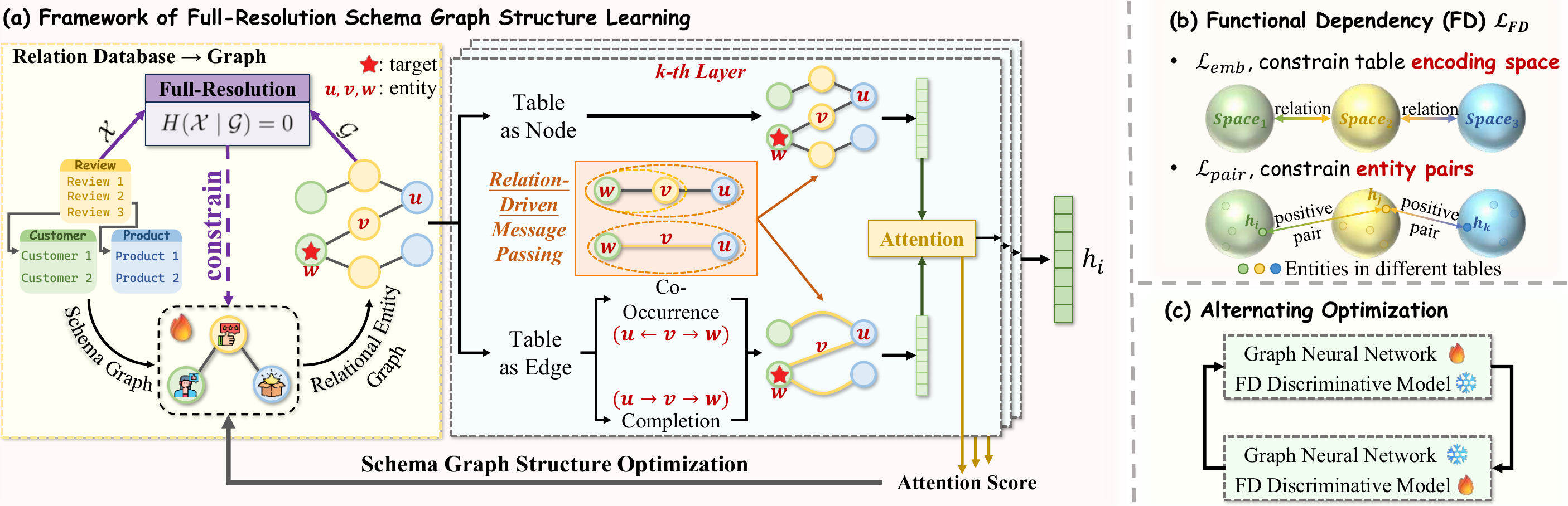}
    \caption{Overview of the proposed \method method.
(a) Through table-as-node and table-as-edge modeling with relation-aware message passing, the roles of tables under specific relational paradigms in the Schema Graph are learned, enabling REG structural optimization under full-resolution settings.
(b) FD constraints, which regularize table-level embedding spaces and entity-level representations via discriminative model.
(c) Alternating optimization between (a) and (b).
}
\end{figure*}
\section{Background and Problem Formulation}
\subsection{Relational Database}
The Relational Database (RDB) is a widely used data storage model which can be formally represented as $(\mathcal{T}, \mathcal{R})$. Here, $\mathcal{T} = \{T_1, \dots, T_n\}$ represents a set of tables in the RDBs and $\mathcal{R} \subset \mathcal{T} \times \mathcal{T}$ is the set of relations between these tables. 
Each table $T \in \mathcal{T}$ is composed of a series of entities (or rows) $v\in T$, and each entity $v$ carries the attribute $x_v$ and an optional timestamp $t_v$. 
The structural connectivity of RDB is maintained through Foreign-Primary Keys (FPKs): 
{each entity has a unique primary key for identification, and foreign keys link entities across tables.}

Functional Dependency (FD)~\cite{FD} is a fundamental concept in relational databases, capturing intrinsic constraints among attributes that govern data consistency and normalization in relational schemas:
\begin{definition}[Functional Dependency]
\label{def_FD}
    In RDBs, a functional dependency is a constraint between two sets of attributes in a relation.
    Given two attribute sets $X$ and $Y$, a functional dependency, denoted as $X \rightarrow Y$, 
    holds if and only if for each unique value of $X$, there exists exactly one corresponding value of $Y$.
\end{definition}

For example, in \texttt{Review} table with attributes (review\_id, customer\_id, product\_id, rating), the FD: \text{review\_id} $\to$ \{\text{customer\_id}, \text{product\_id}, \text{rating}\} holds, 
since each review uniquely identifies the customer who wrote it, the product being reviewed and the assigned rating.

\subsection{Relational Deep Learning}
Relational Deep Learning (RDL) aims to achieve end-to-end predictive modeling of RDBs~\cite{fey2024position}, thereby avoiding time-consuming and complex manual feature engineering. 
To capture relational structure, a RDB is modeled as a ``\emph{Schema Graph}", where nodes represent tables and directed edges denote foreign-primary key relations.
This representation not only describes data organization, but more importantly, it lays the foundation for the transformation of RDB into a temporal heterogeneous ``\emph{Relational Entity Graph''} (REG), $\mathcal G$.
The node attributes of REG contain the feature vector $x_v\in X$ of the entity itself, and the structures are determined by the schema graph.
An intuitive diagram of a RDB and its schema graph is shown in Figure~\ref{fig:sec3}.

{The REG enables GNNs to propagate information across multi-table associations via message passing.
In particular, the timestamp $t_v$ in RDB plays a crucial role in REG, because real data often depends on the time relationship. 
Beyond serving as node features, they enforce temporal causality in time-series prediction, ensuring that message passing at prediction time $t_{predict}$ only incorporates information from $t_v \leq t_{predict}$ and prevents future information leakage.
This can be effectively restricted via temporal neighbor sampling~\cite{fey2024position}}.

\subsection{Full-Resolution Property and Limitations of GSL}
\label{sec3-3}
To better characterize the full-resolution property and facilitate a more principled analysis, we formalize it from an information-theoretic perspective as follows:
\begin{definition}[Full Resolution from an Information-Theoretic Perspective]
\label{def-FR}
    Let $\mathcal{X}$ denote a RDB consisting of tables and FPKs. 
    A REG $\mathcal{G}$ constructed from the schema graph is defined as \emph{full-resolution} with respect to $\mathcal{X}$ if and only if:
    \begin{equation}
        H(\mathcal{X} \mid \mathcal{G}) = 0.
    \end{equation}
    This condition implies that $\mathcal{G}$ preserves all information contained in $\mathcal{X}$ through the schema graph, ensuring that the RDB can be uniquely recovered from $\mathcal{G}$ without uncertainty.
\end{definition}

To satisfy full-resolution, schema graph learning in RDL can differ from traditional GSL. Since the graph serves as a modeling proxy of the RDB rather than the primary data object, any structural modification should preserve the information required to recover the original relational state.
We prove that two operations including edge pruning and edge augmentation in GSL violate the full-resolution property (see Appendix~\ref{app_GSL}).
\begin{proposition}[Non-Invertibility of Edge Pruning]
\label{Non-Invertibility}
    Let $\mathcal{G}_{orig} = (V, E)$ be the initial full-resolution graph of the RDB $\mathcal{X}$. Let $\Phi_{prune}: \mathcal{G}_{orig} \to \mathcal{G}_{sub}$ be an edge pruning operator such that $\mathcal{G}_{sub} = (V, E')$ with $E' \subset E$. If the pruning criterion is not preserved as part of the graph's metadata, then $\mathcal{G}_{sub}=\Phi_{prune}(\mathcal{G}_{orig})$ is not full-resolution:
    \begin{equation}
        H(\mathcal{X} \mid \mathcal{G}_{sub}) > 0.
    \end{equation}
\end{proposition}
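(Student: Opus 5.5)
The plan is to argue by the chain rule for conditional entropy together with a counting (non-injectivity) argument. Since $\mathcal{G}_{orig}$ is full-resolution, Definition~\ref{def-FR} gives $H(\mathcal{X}\mid\mathcal{G}_{orig})=0$. So $\mathcal{X}$ is a deterministic function of $\mathcal{G}_{orig}$. Conversely, $\mathcal{G}_{orig}$ is a deterministic function of $\mathcal{X}$, because the REG is built from the schema graph. Hence $\mathcal{X}$ and $\mathcal{G}_{orig}$ carry the same information. The identity to aim for is
\begin{equation*}
H(\mathcal{X}\mid\mathcal{G}_{sub}) = H(\mathcal{G}_{orig}\mid\mathcal{G}_{sub}),
\end{equation*}
obtained by expanding $H(\mathcal{X},\mathcal{G}_{orig}\mid\mathcal{G}_{sub})$ in the two orders and using that each of $\mathcal{X},\mathcal{G}_{orig}$ determines the other. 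The claim then reduces to showing that $\mathcal{G}_{orig}$ cannot be recovered from $\mathcal{G}_{sub}$ without uncertainty.

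Second, I will show that $\Phi_{prune}$ is not injective on the support of the RDB distribution. The witness is built as follows. Take a pruned edge $e=(u,w)\in E\setminus E'$; it exists because $E'\subset E$ is strict. The edge $e$ encodes an FPK link, say $u$ references $w$ as its foreign key. Now build a second database $\mathcal{X}'$ identical to $\mathcal{X}$ except that the foreign key of $u$ points to a different valid primary key $w'$ of the same table. Its graph $\mathcal{G}'_{orig}$ differs from $\mathcal{G}_{orig}$ only in $e$ versus $e'=(u,w')$. If the pruning criterion is applied so that $e'$ is also removed, then $\Phi_{prune}(\mathcal{G}_{orig})=\Phi_{prune}(\mathcal{G}'_{orig})=\mathcal{G}_{sub}$, since the node sets, attributes and remaining edges coincide.

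The criterion is not stored as metadata, which is exactly the hypothesis. So $\mathcal{G}_{sub}$ carries no record of which edges were removed or where they pointed. An alternative witness is $\mathcal{X}''$ in which $u$'s foreign key is null or absent, so that $e$ was never present. Either way, two distinct RDBs with positive probability map to the same $\mathcal{G}_{sub}$. The posterior $p(\mathcal{X}\mid\mathcal{G}_{sub})$ is then nondegenerate on that fiber, so $H(\mathcal{X}\mid\mathcal{G}_{sub})\ge \Pr[\mathcal{G}_{sub}=g]\,H(\mathcal{X}\mid \mathcal{G}_{sub}=g)>0$.

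The main obstacle is making the non-injectivity rigorous under a probabilistic model of $\mathcal{X}$. The proposition is purely structural, so I must impose a mild assumption that the distribution over RDBs is not degenerate: at least two configurations in the relevant fiber have positive mass. This is justified because the schema permits $u$'s foreign key to take multiple values. I will state this as a nondegeneracy assumption and note its role. The argument fails exactly when the pruning rule is invertible given the stored metadata. That case is the one excluded by the hypothesis, which explains why the hypothesis is needed.
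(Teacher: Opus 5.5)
Your proposal is correct and follows the paper's proof essentially step for step: reduce $H(\mathcal{X}\mid\mathcal{G}_{sub})>0$ to $H(\mathcal{G}_{orig}\mid\mathcal{G}_{sub})>0$ using that $\mathcal{X}$ and $\mathcal{G}_{orig}$ determine each other, then exhibit two original graphs that differ only in a single pruned edge and collapse to the same $\mathcal{G}_{sub}$. Your explicit nondegeneracy assumption (both preimages carry positive probability), together with building the witness at the level of RDBs rather than arbitrary edge sets, makes rigorous a step the paper leaves implicit when it passes from $|\Phi_{prune}^{-1}(\mathcal{G}_{sub})|\ge 2$ to $\max_g \mathbb{P}(\mathcal{G}_{orig}=g\mid\mathcal{G}_{sub})<1$.
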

\begin{proposition}[Indistinguishability of Edge Addition]
    Define $\mathcal{G}_{orig}$ as in Proposition~\ref{Non-Invertibility}.
    Let $\Phi_{add}: \mathcal{G}_{orig} \to \mathcal{G}_{aug}$ be a deterministic edge addition operator that generates an augmented graph $\mathcal{G}_{aug} = (V, E \cup \Delta E)$, where $\Delta E$ represents inferred edges. If $E$ and $\Delta E$ are indistinguishable in the output graph, then $\mathcal{G}_{aug}=\Phi_{add}(\mathcal{G}_{orig})$ is not full-resolution:
    \begin{equation}
        H(\mathcal{X} \mid \mathcal{G}_{aug}) > 0.
    \end{equation}
\end{proposition}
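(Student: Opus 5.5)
The approach is a non-injectivity (collision) argument for the composite map $\mathcal{X} \mapsto \mathcal{G}_{orig} \mapsto \mathcal{G}_{aug}$. Full-resolution in Definition~\ref{def-FR} holds exactly when $\mathcal{X}$ is a deterministic function of $\mathcal{G}$ almost surely. So it suffices to exhibit two databases $\mathcal{X}_1 \neq \mathcal{X}_2$, both with positive probability under the data distribution, such that $\Phi_{add}(\mathcal{G}_{orig}(\mathcal{X}_1)) = \Phi_{add}(\mathcal{G}_{orig}(\mathcal{X}_2))$. Then the posterior of $\mathcal{X}$ given that graph puts positive mass on at least two values. Its entropy is therefore strictly positive on an event of positive probability, which gives $H(\mathcal{X}\mid\mathcal{G}_{aug}) = \mathbb{E}_{\mathcal{G}_{aug}}[H(\mathcal{X}\mid \mathcal{G}_{aug}=g)] > 0$.

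The key steps, in order, are as follows.
\begin{enumerate}
\item Formalize the indistinguishability hypothesis. The output graph is a typed edge multiset over $V$ in which an inferred edge $e\in\Delta E$ receives the same type and attributes as an FPK edge between the same tables. Equivalently, the output records only the set $E\cup\Delta E$, with no label telling original edges from inferred ones.
\item Pick a database $\mathcal{X}_1$ in the support and a pair $(u,w)\in\Delta E(\mathcal{X}_1)\setminus E_1$. Such a pair exists whenever the operator adds at least one genuinely new edge; otherwise $\Phi_{add}$ is the identity and the statement is vacuous. Build $\mathcal{X}_2$ by modifying $\mathcal{X}_1$ so that $(u,w)$ becomes a true FPK link. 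For example, set the foreign key of $u$ to point at $w$ in place of its previous target $w'$, and keep $(u,w')$ reachable as an inferred edge. Alternatively, use a multi-valued link table so that adding a link does not delete one.
\item Check that $E_2\cup\Delta E_2 = E_1\cup\Delta E_1$. Since $\Phi_{add}$ is deterministic, the inferred edges are a function of the features and structure. The swap is arranged so that each graph's inferred edges are precisely the FPK edges the other graph lacks.
\item Conclude via the entropy decomposition from the first paragraph. Because $\mathcal{X}_1\neq\mathcal{X}_2$ and both have positive probability, the conditional entropy is positive.
\end{enumerate}

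The main obstacle is step 3: making the collision hold for an \emph{arbitrary} deterministic $\Phi_{add}$. The inferred set $\Delta E$ depends on the input, so changing an FPK could change which edges get inferred. I would handle this in two ways. First, I would read the hypothesis ``$E$ and $\Delta E$ are indistinguishable'' as stating exactly that some realizable pair of databases yields the same union, which is the natural formal content of indistinguishability. Second, failing that, I would use a counting argument. For a fixed node set $V$ and fixed features, the map $E\mapsto E\cup\Delta E(E)$ sends the set of valid FPK edge sets into the set of supersets of $E$. If no two valid edge sets ever collided, the original edges could be decoded from the union, and the output would in effect be distinguishable, contradicting the hypothesis. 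So the hypothesis itself forces non-injectivity on the support, and step 4 then applies.
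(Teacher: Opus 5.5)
Once you settle on the first reading in your last paragraph, your argument is the paper's. The paper uses $H(\mathcal{X}\mid\mathcal{G}_{orig})=0$ and $H(\mathcal{G}_{orig}\mid\mathcal{X})=0$ to reduce to showing $H(\mathcal{G}_{orig}\mid\mathcal{G}_{aug})>0$. It then simply asserts that, when original and inferred edges are indistinguishable, distinct original graphs produce the same augmented graph. So it too takes indistinguishability to \emph{mean} non-injectivity. Working directly with the composite $\mathcal{X}\mapsto\mathcal{G}_{aug}$ is equivalent. Your insistence that both colliding databases have positive probability is a point the paper skips, but it is genuinely needed for a strictly positive conditional entropy.

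The rest of the proposal should be dropped, because it does not hold.

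\begin{enumerate}
\item The swap in steps 2--3 cannot be arranged for an arbitrary deterministic $\Phi_{add}$, since you do not control what it infers (as you note yourself).
\item More seriously, the ``counting argument'' is not a derivation. Under your step-1 formalization (inferred edges get the same type and attributes, with no provenance label), the conclusion can fail. Take a schema with one foreign-key column, and let $\Phi_{add}$ link every referencing row $u$, by an edge of that same type, to a fixed referenced row $w_0$. The original target of $u$ is then its unique neighbour other than $w_0$, or $w_0$ if there is no other. So $\mathcal{G}_{aug}$ still determines $\mathcal{G}_{orig}$, hence $\mathcal{X}$, and $H(\mathcal{X}\mid\mathcal{G}_{aug})=0$.
\item Similarly, an operator with $\Delta E\subseteq E$ satisfies step 1 yet leaves $\mathcal{G}_{aug}=\mathcal{G}_{orig}$ full-resolution. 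That case is a counterexample to the step-1 reading, not a vacuous one.
\end{enumerate}

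Label-level indistinguishability does not imply non-identifiability. The conditional entropy is computed with the deterministic $\Phi_{add}$ and the FPK cardinality constraints known, and these can be enough to decode. The proposition holds only when ``indistinguishable'' is taken to mean that two databases of positive probability yield the same $E\cup\Delta E$. Under that reading it is an immediate consequence of Definition~\ref{def-FR}. That is all the paper's proof establishes, and it is all your proof should claim.
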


\subsection{Co-Optimization of REG and GNN Models}
As discussed in Section \ref{sec-intro}, a predefined REG may hinder the model from effectively extracting features that are critical for downstream tasks, potentially limiting performance. To address this issue, jointly optimizing the graph structure and model parameters can be formulated as:
\begin{equation}
    \min_{\theta \in \Theta} \; \mathcal{L}(\mathcal{M}(G)) \quad \text{s.t.} \; G \in \mathcal{G}_{\mathrm{RDB}}^{\mathrm{full}}.
\end{equation}
Here, $\mathcal{G}_{\mathrm{RDB}}^{\mathrm{full}}$ denotes the set of graphs that satisfy the \emph{full-resolution} property with respect to the RDB.
To characterize this property more concretely, we require that $G \in \mathcal{G}_{\mathrm{RDB}}^{\mathrm{full}}$ be capable of fully reconstructing the original RDB structure.

\section{Method}
In this section, We propose \textbf{\method}, a \textit{\textbf{F}ull-\textbf{R}esolution  and \textbf{O}ptimizable \textbf{G}raph Structure Learning} framework for RDL.
We first show that optimizing table roles preserves the full-resolution property and relate it to graph structure learning problem ($\triangleright$ Sec. \ref{sec4-1}).
Building on this, we analyze the modeling differences between table-as-node and table-as-edge instantiations and propose a framework for embedding-level schema graph optimization
 ($\triangleright$ Sec. \ref{sec4-2}).
In particular, we design distinct information propagation methods for different relational paradigms to better capture relational semantics in RDBs ($\triangleright$ Sec. \ref{sec4-3}).
Finally, we introduce the FD loss constraints at both table-level and the entity-level, encouraging the learned representations to respect the inherent dependencies of RDBs ($\triangleright$ Sec. \ref{sec4-4}).
Implementation details can be found in the Appendix~\ref{app_imp}.

\subsection{Table-as-Node \& Table-as-Edge: A Full-Resolution Graph Structure Learning Method}
\label{sec4-1}
When modeling an RDB as a graph, the full-resolution property is typically considered to preserve relational semantics~\cite{fey2024position}. However, as stated in Sec. \ref{sec3-3}, GSL methods that directly manipulate graph edges are not suitable for this scenario.
Inspired by \citet{RDB2G}, table-as-edge in the schema graph offers an alternative for modeling relational semantics.
In this setting, we prove that table-as-node and table-as-edge satisfy the full-resolution property under preserved entity and FPK types, features, and directions (see Appendix~\ref{app-FR-NE}).
Fig.~\ref{node-dege} illustrates how table-as-edge preserves the full-resolution property.
\begin{figure}[htbp]
    \centering
    \includegraphics[width=0.9\linewidth]{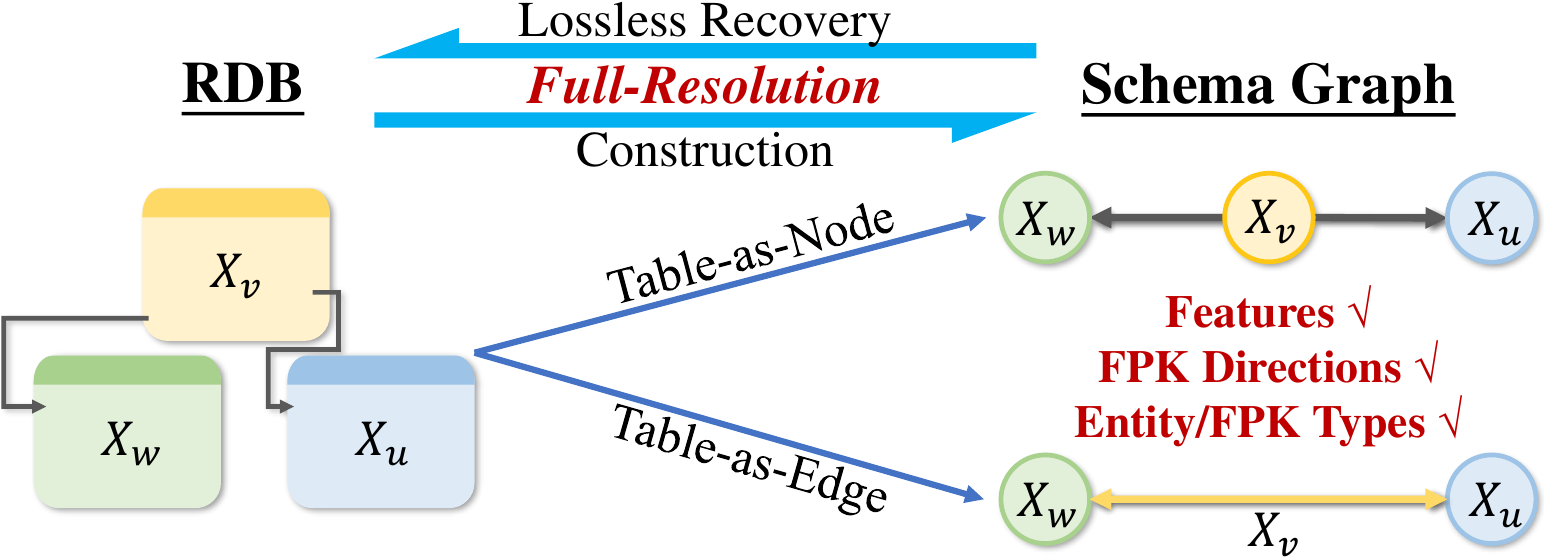}
    \caption{Illustration of table-as-node and table-as-edge}
    \label{node-dege}
\end{figure}

Building on this result, we reinterpret the GSL objective \emph{from controlling edge existence to determining the roles of original tables}, both of which govern information propagation during message passing. This enables graph structure learning from an alternative perspective.
Theorem \ref{theo-FR-my} establishes that \method based on table-as-node and table-as-edge continues to satisfy the full-resolution property, serving as a structural inductive bias to preserve relational semantics:
\begin{theorem}[Full-Resolution of Adaptive Mapping] 
\label{theo-FR-my}
    Let $\mathcal{T} = \{T_1, T_2, \dots, T_k\}$ be a set of relational tables and $\mathcal{R}$ be the set of all FPK relations. Let $g: T_i \to \{f_n, f_e\}$ be a self-learning decision function that assigns each table $T_i$ to either a Table-as-Node mapping ($f_n$) or a Table-as-Edge mapping ($f_e$).
    The combined mapping $\mathcal{F}_{adaptive}: (\mathcal{T}, \mathcal{R}) \to (V, E, X_V, X_E)$ is full-resolution.
\end{theorem}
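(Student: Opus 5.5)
The plan is to exhibit an explicit decoding map $\mathcal{D}$ with $\mathcal{D}(\mathcal{F}_{adaptive}(\mathcal{T},\mathcal{R})) = (\mathcal{T},\mathcal{R})$. Full resolution in the sense of Def.~\ref{def-FR} then follows, because $H(\mathcal{X}\mid\mathcal{G}) = H(\mathcal{X}\mid \mathcal{G}, \mathcal{D}(\mathcal{G})) = 0$ whenever $\mathcal{X}$ is a deterministic function of $\mathcal{G}$. I will take as given the per-mapping result stated before the theorem (Appendix~\ref{app-FR-NE}). It says that $f_n$ alone and $f_e$ alone are full-resolution whenever entity and FPK types, features, and directions are preserved. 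Concretely, the images of $f_n$ and $f_e$ are typed and carry their features: $f_n$ sends each row $v\in T_i$ to a node of type $T_i$ with feature $x_v$, and each FK to a directed typed edge. $f_e$ sends each row $v\in T_i$ to an edge of type $T_i$ with feature $x_v$ (and $t_v$), joining the entities referenced by $v$'s foreign keys, with direction and FK-slot labels kept.

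The first step is to argue that the combined graph is still unambiguously typed. The decision function $g$ is applied at the table level, not the row level, so every node type and every edge type in $(V,E,X_V,X_E)$ originates from exactly one table and one mapping. The type vocabulary therefore records $g$ itself: a type $T_i$ appears among node types iff $g(T_i)=f_n$, and among edge types iff $g(T_i)=f_e$. Edges that $f_n$ creates from FK links carry relation types $(T_i\to T_j)$ that are disjoint from the table-edge types produced by $f_e$. The partition of $\mathcal{G}$ into $f_n$-parts and $f_e$-parts can thus be read off $\mathcal{G}$. This is the key difference from Propositions on pruning and addition, where the modification was not recorded.

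The second step is the decoding, done table by table. If $g(T_i)=f_n$, I recover the rows of $T_i$ as the nodes of type $T_i$ with their features. If $g(T_i)=f_e$, I recover them as the edges of type $T_i$ with features $X_E$. Next I recover $\mathcal{R}$ and the FK values. An FK link from $v\in T_i$ to $u\in T_j$ is read in one of two ways. If $g(T_i)=f_n$, it is a typed edge $(v,u)$. If $g(T_i)=f_e$, it is an endpoint of the edge $v$, identified by the FK-slot label. When $T_j$ is itself mapped by $f_e$, the endpoint structure must still identify a unique entity.

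I expect this mixed case to be the main obstacle: an edge-table references another edge-table, or an edge-table has more or fewer than two FKs. Here I would first check that the construction of $f_e$ handles it. A row with $k\neq 2$ FKs, or one referencing a table that is itself an edge, can be modeled as a hyperedge or an edge-to-edge incidence, labeled by FK slot. If the construction does not, I would restrict $g$ to admissible assignments, in which only tables with exactly two FKs pointing to node-role tables may take $f_e$, and state that $g$ ranges over these. Under either resolution, each FK value is recovered injectively, since the per-table argument of Appendix~\ref{app-FR-NE} applies locally. The union of the per-table decoders is then a left inverse of $\mathcal{F}_{adaptive}$, which proves the theorem.
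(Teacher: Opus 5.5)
Your proposal is correct and follows the paper's route. You partition the tables by $g$, take local inverses from the table-as-node and table-as-edge lemmas, and glue them into a global inverse because rows of different tables are never merged. Your extra steps, reading $g$ off the type vocabulary and handling edge-role tables that reference edge-role tables, make explicit what the paper's proof leaves implicit in its ``disjoint representation domain'' remark.
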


Since tables do not exist in isolation but are connected via FPK relations, a table may play different roles in different relational contexts. 
Motivated by this observation, we further refine graph structure learning in RDL by determining the roles of tables \emph{under specific relations}, enabling the extraction of finer-grained structural information.

\subsection{Node or Edge? From Graph Construction to Message Passing Optimization}
\label{sec4-2}
Despite the benefits of role-based table modeling, a fundamental challenge remains: ``table-as-node" or ``table-as-edge" is discrete and mutually exclusive, precluding continuous interpolation between entity and relation-like semantics.

Crucially, this distinction is not merely structural in schema graph, but operational in GNNs: modeling a table directly affects the accessible information in message passing process.
To make this distinction explicit, consider a relational dependency where information flows from entity $u$ to a target entity $w$ via an intermediate entity $v$ ($w-v-u$). 
If $w$ needs to obtain information from $u$, then:
\ding{172} \textbf{Table-as-Node:} Require two rounds of aggregation, \ie, $h_w^{(2),N} = f_n^2(h_v^{(1),N}) = f_n^2(f_n^1(X_u))$.
\ding{173} \textbf{Table-as-Edge:} In a single aggregation step, corresponding to $h_w^{(1),E} = f_e(X_u)$.
Here, $h_w^{(k),N/E}$ denotes the representation of $w$ after the $k$-th convolution when the intermediate entity $u$ is modeled as a node (N) or as an edge (E).

The following theorem formally characterizes their differences from an information-theoretic perspective.
\begin{theorem}[Information Retention Advantage]
\label{sec4_theorem}
    Let $\mathcal{C}$ be the global context, and $f_n^2\circ f_n^1  \in \mathcal{F}_{N}, f_e \in\mathcal{F}_{E}$ be the aggregation functions in hypothesis spaces, respectively.
    Assume that the embedding dimension is sufficiently large to avoid capacity-induced bottlenecks. 
    If the intermediate node aggregation in the table-as-node strategy is not information-preserving, then modeling relational information as an edge allows the target entity to retain strictly more information about the source entity:
    \begin{equation}
        \sup_{f_e} I(h_w^{(1), E}; X_u \mid \mathcal{C}) > \sup_{f_n^2\circ f_n^1} I(h_w^{(2),N}; X_u \mid \mathcal{C}). 
    \end{equation}
\end{theorem}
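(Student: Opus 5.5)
The plan is to compare the two hypothesis spaces through the Markov structure each one induces, conditional on $\mathcal{C}$. In the table-as-node case, $h_w^{(2),N} = f_n^2(h_v^{(1),N})$ with $h_v^{(1),N} = f_n^1(X_u)$, possibly together with context variables. This gives the chain $X_u \to h_v^{(1),N} \to h_w^{(2),N}$ conditional on $\mathcal{C}$. The data processing inequality then yields
\[
I(h_w^{(2),N}; X_u \mid \mathcal{C}) \le I(h_v^{(1),N}; X_u \mid \mathcal{C}) \quad\text{for every } f_n^2\circ f_n^1 \in \mathcal{F}_N .
\]
In the table-as-edge case, $h_w^{(1),E} = f_e(X_u)$ is a direct map. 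Under the capacity assumption (embedding dimension large enough), $\mathcal{F}_E$ contains an injective, or at least sufficient, map of $X_u$. Hence
\[
\sup_{f_e} I(h_w^{(1),E}; X_u \mid \mathcal{C}) = H(X_u \mid \mathcal{C}),
\]
or the corresponding differential or supremum quantity if $X_u$ is continuous. In that case I would phrase the argument via sufficient statistics.

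The key steps, in order, are as follows.
\begin{enumerate}
\item Formalize the hypothesis that the intermediate aggregation is not information-preserving. Concretely, for every admissible $f_n^1$ (which also mixes in $v$'s own features and its other neighbours, as in a real aggregation), the loss is uniformly positive:
\[
\sup_{f_n^1} I(h_v^{(1),N}; X_u \mid \mathcal{C}) \le H(X_u \mid \mathcal{C}) - \delta \quad\text{for some } \delta > 0 .
\]
\item Combine this bound with the data processing inequality above and take the supremum over $f_n^2$. This gives
\[
\sup_{f_n^2\circ f_n^1} I(h_w^{(2),N}; X_u \mid \mathcal{C}) \le H(X_u \mid \mathcal{C}) - \delta .
\]
\item Exhibit an explicit edge aggregator achieving the full value, namely $f_e$ injective on the support of $X_u$. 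The capacity assumption is what guarantees this map lies in $\mathcal{F}_E$.
\item Conclude the strict inequality.
\end{enumerate}

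I expect the main obstacle to be step 1, specifically making "not information-preserving" strong enough to survive the supremum. A pointwise strict loss for each $f_n^1$ does not by itself give a strict gap after taking $\sup$, since the losses could approach zero. I would first state the hypothesis directly at the level of the supremum, as above. To give it content, I would note that in table-as-node message passing $h_v^{(1),N}$ is a permutation-invariant aggregate (sum or mean) over all neighbours of $v$ in $u$'s table, plus $v$'s own features. When $v$ has several such neighbours, $X_u$ is not identifiable from the aggregate given $\mathcal{C}$, which yields a fixed positive conditional entropy $H(X_u \mid h_v^{(1),N}, \mathcal{C}) \ge \delta$.

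By contrast, the edge mapping indexes the message by the specific $u$ linked to $w$, so no such mixing occurs. This asymmetry is where the strictness really comes from, and I would make it explicit in the proof.
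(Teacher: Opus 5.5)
Your skeleton is the paper's proof: the conditional Markov chain $X_u \to h_v^{(1),N} \to h_w^{(2),N}$ given $\mathcal{C}$, the data processing inequality, a lossy first hop that keeps the node side strictly below $H(X_u\mid\mathcal{C})$, and an injective $f_e$ (supplied by the capacity assumption) that attains $H(X_u\mid\mathcal{C})$ on the edge side. Your uniform gap $\delta$ improves on the paper. The paper passes from the pointwise bound $I(h_v^{(1),N};X_u\mid\mathcal{C})<H(X_u\mid\mathcal{C})$ to a strict inequality between suprema only by asserting that both strategies ``attain their respective upper bounds.'' That is exactly the step you flag as needing the hypothesis at the level of $\sup_{f_n^1}$. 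With step~1 taken as a hypothesis, steps~2--4 go through.

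The argument you say would supply the strictness is wrong, and you should not put it in the proof. The context $\mathcal{C}=\{A,\{X_k\mid k\neq u\}\}$ fixes $X_v$, the features of every other neighbour of $v$, and $v$'s degree. So with sum or mean aggregation, $h_v^{(1),N}$ is $\phi(X_u)$ scaled by $1$ or by $1/|\mathcal{N}(v)|$ and shifted by a term determined by $\mathcal{C}$. It is therefore an injective function of $X_u$ given $\mathcal{C}$ whenever the per-neighbour encoder $\phi$ and the update map are injective, and the capacity assumption is precisely what makes such choices available. Having ``several neighbours in $u$'s table'' therefore leaves no residual conditional entropy. The claimed asymmetry is also absent. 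In the paper's edge convolution, $w$ aggregates the messages $m_{u\to w}$ over all $u$ reached through some $v$, so mixing happens on the edge side too, and it is equally invertible given $\mathcal{C}$. Under this conditioning, strictness cannot be derived from neighbourhood mixing. It rests entirely on the lossiness hypothesis, read as a restriction on $\mathcal{F}_N$ that the capacity assumption does not remove, for example an aggregator that is not injective on multisets, such as max. That is effectively how the paper uses it as well. Keep your uniform-gap condition as an explicit assumption rather than trying to derive it.
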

The analysis of Theorem~\ref{sec4_theorem} are provided in Appendix~\ref{app_theorem}.
This reveals that the advantage of table-as-edge modeling lies in \emph{avoiding intermediate aggregation bottlenecks}.

{Building on Theorem~\ref{sec4_theorem}, we reinterpret the optimization of table-as-node/edge paradigm \emph{from graph construction to embedding fusion across different message-passing paths}.}
Specifically, we introduce a role-aware gating model $\mathcal{M}^{att}$ that adaptively balances the information from table-as-node and table-as-edge during message passing:
\begin{equation}
\label{eq:gate}
    \tilde{g}_{v,r}^{(l)} = \sigma\left( \mathcal{M}^{att}_r \left( [h_{w}^{(l),N}|| h_{w}^{(l),E}] \right) \right).
\end{equation}
{This gate reflects the relative importance of edge-level relational information  from table-as-edge paths ($u-v-w$) compared to node-level aggregation ($v-w$) under relation $r$, with respect to updating the representation of $w$.}

{To stabilize training, we smooth the adaptive gates by Exponential Moving Average (EMA)~\cite{holt2004forecasting}, after which the values are aggregated to obtain table-level gates:}
\begin{equation}
\label{eq:gate_glo}
\begin{aligned}
    & g_{v,r} = (1-\alpha) \tilde{g}_{v,r}^{(l)}+\alpha\bar{g}_{r},\\
    & \bar{g}_r \leftarrow \mathbb{E} \left[ g_{v,r} \right].
\end{aligned}
\end{equation}
{Here, $\bar{g}_r$ captures the average gating behavior, reflecting the learned role of the intermediate table $T$ under relation $r$.}

Finally, the updated representation of entity $w$ at layer $l$ is obtained by combining node-level and edge-level messages weighted by the relation-conditioned role gate:
\begin{equation}
\label{eq:fusion}
    h_{i}^{(l)} = \sum_{r}(1 - \bar{g}_r) \cdot h_{i}^{(l),N} + \bar{g}_r \cdot h_{i}^{E}.
\end{equation}

\subsection{Relation-Driven Message Passing for RDL}
\label{sec4-3}
Having established the optimization framework of \method, we now describe the feature learning mechanism.
Specifically, we introduce the relation-driven graph convolution operator that defines how node representations are computed under table-as-node and table-as-edge formulations.

Under the table-as-node formulation in RDBs, relationships are modeled as directed connections between entities from different tables, following a single $u \rightarrow v$ pattern.
In this case, considering only node and edge types is sufficient, and standard heterogeneous GNNs can be directly applied.

In contrast, table-as-edge formulation is inherently more complex, as it involves interactions among more than two tables rather than a single table pair.
Accordingly, we identify two representative patterns:
\begin{equation}
\begin{aligned}
    & \textsc{Co-Occurrence}:\; u\leftarrow v\rightarrow w \; \Rightarrow \;u\overset{v}\longleftrightarrow w, \\
    &\textsc{Completion}:\; u\rightarrow v\rightarrow w \; \Rightarrow \;u\overset{v}\longrightarrow w. \\
\end{aligned}
\end{equation}
The reasons for not adopting other table-as-edge paradigm types can be found in Appendix \ref{TE-reason}.
We now elaborate on the design of the aggregation mechanisms within the two table-as-edge relation paradigms proposed in this work.

\paragraph{Co-Occurrence Type:}In the Co-Occurrence paradigm, $u$ and $w$ are observed together in $v$ (e.g., $\texttt{user}\leftarrow \texttt{review} \rightarrow \texttt{product}$), meaning they play \emph{symmetric roles} in $v$, and their interaction is explicitly captured through $v$.
In the context of $v$, we model the message from $u$ to $w$ to capture their joint co-occurrence:
\begin{equation}
\label{eq:co}
\begin{aligned}
    m_{u \rightarrow w} &= f_e^{u\leftarrow v\rightarrow w}(h_w,h_v,h_u) \\
    &= W \left( h_w || h_v || h_u \right),
\end{aligned}
\end{equation}
where $||$ denotes concatenation and $W$ is a learnable matrix. 

\paragraph{Completion Type:}In the Completion paradigm, the message from $u$ to $w$ is mediated by the entity $v$, which acts as \emph{an intermediate that modulates the influence} of $u$ on $w$ (e.g., $\texttt{standing} \rightarrow \texttt{race} \rightarrow \texttt{circuit}$ in Formula One racing competition). Formally, the message is defined as:
\begin{equation}
\label{eq:com}
\begin{aligned}
    m_{u \rightarrow w} &= f_e^{u\rightarrow v\rightarrow w}(h_w,h_v,h_u) \\
    &= W_2 \left( h_w || \sigma(f(h_v || h_u)) \cdot W_1 (h_v || h_u)\right),
\end{aligned}
\end{equation}
where $||$ denotes concatenation, $\sigma$ is a gating function (e.g., sigmoid), and $W_1$, $W_2$ are learnable projection matrices.

These two message-passing mechanisms endow the model with the ability to perceive  relational patterns within table-as-edge. Furthermore, to enable the model to be sensitive to relationships connecting different tables, we assign a dedicated GNN network to each possible relation type.

\begin{table*}[t]
\centering
\begin{minipage}{0.47\textwidth}
\captionof{table}{Performance comparison on the \textbf{Entity Classification} task measured by ROC-AUC (↑). The best results are highlighted in \textbf{bold}, and the second-best results are \underline{underlined}.}
\label{tab:cls}
  \resizebox{\linewidth}{!}{
\setlength{\tabcolsep}{2.2pt}
\setlength{\extrarowheight}{-0.1em}
\begin{tabular}{ll|cccc}
\toprule
\textbf{Dataset} & \textbf{Task} & \textbf{LightGBM} & \textbf{RDL} & \textbf{RelGNN} & \textbf{\method}  \\
\midrule
\multirow{2}{*}{rel-avito} & user-visits & 52.82 $_{\pm0.25}$  & \underline{66.35 $_{\pm0.17}$} & 65.67 $_{\pm0.12}$ & \textbf{66.42 $_{\pm0.14}$} \\
\cmidrule(l){2-6}
 & user-clicks & 54.65 $_{\pm0.59}$  & 64.87 $_{\pm0.42}$ & \underline{66.31 $_{\pm0.66}$} & \textbf{67.22 $_{\pm0.25}$} \\
\midrule
\multirow{2}{*}{rel-event} & user-repeat & 67.83 $_{\pm1.70}$  & 75.90 $_{\pm2.00}$ & \underline{78.15 $_{\pm0.10}$} & \textbf{79.11 $_{\pm0.87}$} \\
\cmidrule(l){2-6}
 & user-ignore & 78.88 $_{\pm0.93}$  & 81.92 $_{\pm0.48}$ & \underline{82.55 $_{\pm0.45}$} & \textbf{85.49 $_{\pm0.10}$} \\
\midrule
\multirow{2}{*}{rel-fl}  & driver-dnf  & 65.26 $_{\pm4.74}$  & 72.73 $_{\pm1.07}$ & \underline{73.45 $_{\pm0.64}$} & \textbf{74.39 $_{\pm0.85}$} \\
\cmidrule(l){2-6}
 & driver-top3 & 69.52 $_{\pm4.08}$ & 78.31 $_{\pm1.20}$ & \underline{82.21 $_{\pm1.42}$} & \textbf{82.73 $_{\pm0.51}$} \\
\midrule
rel-trial  & study-outcome & 70.70 $_{\pm0.50}$  & 68.63 $_{\pm0.34}$ & \underline{69.43 $_{\pm0.46}$} & \textbf{70.92 $_{\pm0.21}$} \\
\bottomrule
\end{tabular}
}

\end{minipage}
\hfill
\begin{minipage}{0.513\textwidth}
\captionof{table}{Performance comparison on the \textbf{Entity Recommendation} task measured by MAP (↑). The best results are highlighted in \textbf{bold}, and the second-best results are \underline{underlined}.}
\label{tab:lp}
  \resizebox{\linewidth}{!}{
\setlength{\tabcolsep}{2.2pt}
\setlength{\extrarowheight}{-0.1em}
\begin{tabular}{ll|cccc}
\toprule
\textbf{Dataset}           & \textbf{Task}         & \textbf{LightGBM} & \makecell{\textbf{RDL}\\\textbf{(ID-GNN)}} & \textbf{RelGNN} & \textbf{\method}         \\
\midrule
rel-avito                  & user-ad-visit         & 0.05$_{\pm0.00}$         & \underline{3.73$_{\pm0.02}$}     & 3.14$_{\pm0.61}$     & \textbf{3.87$_{\pm0.04}$}  \\
\midrule
rel-hm                     & user-item-purchase    & 0.34$_{\pm0.03}$         & \underline{2.78$_{\pm0.01}$}     & 2.76$_{\pm0.01}$     & \textbf{2.86$_{\pm0.01}$}  \\
\midrule
\multirow{2}{*}{rel-stack} & user-post-comment     & 0.04$_{\pm0.01}$         & 12.98$_{\pm0.21}$    & \underline{13.54$_{\pm0.18}$}    & \textbf{13.58$_{\pm0.19}$} \\
\cmidrule(l){2-6}
                           & post-post-related     & 1.98$_{\pm0.61}$         & 10.81$_{\pm0.15}$    & \underline{11.16$_{\pm0.00}$}    & \textbf{11.48$_{\pm0.11}$} \\
\midrule
\multirow{2}{*}{rel-trial} & condition-sponsor-run & 4.53$_{\pm0.10}$         & 10.24$_{\pm0.06}$    & \underline{11.05$_{\pm0.10}$}    & \textbf{11.20$_{\pm0.07}$} \\
\cmidrule(l){2-6}
                           & site-sponsor-run      & 9.63$_{\pm0.31}$         & 18.91$_{\pm0.03}$    & \textbf{19.10$_{\pm0.09}$}    & \underline{19.09$_{\pm0.03}$}        \\
\bottomrule
\end{tabular}
}

\end{minipage}
\label{tab:exp}
\end{table*}
\begin{table*}[t]
\centering
\caption{Performance comparison on the \textbf{Entity Regression} task measured by MAE (↓). The best results are highlighted in \textbf{bold}, and the second-best results are \underline{underlined}.}
\label{tab:reg}
\small
\resizebox{\linewidth}{!}{
\setlength{\tabcolsep}{7pt}
\begin{tabular}{ll|cccccc}
\toprule
\textbf{Dataset}            & \textbf{Task}   & \textbf{Entity Mean} & \textbf{Entity Median} & \textbf{LightGBM}         & \textbf{RDL}             & \textbf{RelGNN}  & \textbf{\method}                 \\
\midrule
rel-avito                   & ad-ctr          & 0.0462               & 0.0462                 & 0.0408$_{\pm0.0002}$           & 0.0413$_{\pm0.0011}$          & \underline{0.0393$_{\pm0.0008}$}  & \textbf{0.0385$_{\pm0.0002}$}      \\
\midrule
rel-event                   & user-attendance & 0.3035               & 0.2692                 & 0.2636$_{\pm0.0002}$           & 0.2582$_{\pm0.0058}$          & \underline{0.2417$_{\pm0.0009}$}  & \textbf{0.2408$_{\pm0.0022}$}      \\
\midrule
rel-fl                      & driver-position & 8.5013               & 8.5185                 & 4.0685$_{\pm0.0387}$           & 4.2074$_{\pm0.1851}$          & \underline{3.9807$_{\pm0.0841}$}  & \textbf{3.8906$_{\pm0.0695}$}      \\
\midrule
rel-hm                      & item-sales      & 0.1105               & 0.0780                 & 0.0761$_{\pm0.0000}$           & \textbf{0.0556$_{\pm0.0004}$} & 0.0564$_{\pm0.0010}$  & \underline{0.0561$_{\pm0.0001}$} \\
\midrule
rel-stack                   & post-votes      & 0.1061               & 0.0694                 & 0.0679$_{\pm0.0000}$           & \textbf{0.0652$_{\pm0.0001}$} & 0.0656$_{\pm0.0002}$  & \underline{0.0655$_{\pm0.0001}$} \\
\midrule                        
\multirow{2}{*}{rel-trial} & study-adverse   & 57.9303              & 57.9303                & \textbf{42.8452$_{\pm0.7798}$} & 46.1570$_{\pm0.1783}$         & 46.6503$_{\pm0.4006}$ & \underline{44.7536$_{\pm0.9047}$}              \\
\cmidrule(l){2-8}
 & site-success    & 0.4480               & 0.4411                 & 0.4237$_{\pm0.0020}$           & 0.4163$_{\pm0.0082}$          & \underline{0.3597$_{\pm0.0164}$}  & \textbf{0.3504$_{\pm0.0222}$}                \\
\bottomrule
\end{tabular}
}
\end{table*}

\subsection{Functional Dependency Constraints in RDBs}
\label{sec4-4}

Functional dependency (FD) constraints tables are essential for preserving relational semantics in RDBs. For instance, a \texttt{review} uniquely determines its associated \texttt{customer} and \texttt{product}. To maintain this inherent semantic consistency, the \emph{FD Constraints should be taken into account} during the graph construction and model learning process.

In our setting, since all tuples within a table are jointly encoded, we do not consider intra-table feature-level FDs. Instead, we focus on FDs between entities across tables.

\paragraph{FD Constraints in Graph Structure.}
As analyzed in Section~\ref{sec4-1}, both \emph{table-as-Node} and \emph{table-as-Edge} satisfy the full-resolution property for RDBs, \ie, the constructed graph can be fully recovered back to the original RDB without information loss. Consequently, FD constraints are preserved at the graph-structural level.

\paragraph{FD Constraints in Graph Representation.}
While FD constraints are structurally preserved, we further aim to encode such dependencies in the learned representations.
To this end, we interpret FDs as relations between entity representations across tables, and adopt embedding differences as a differentiable proxy for relational transformations:
\begin{equation}
\begin{gathered}
    \text{relation}_{ij}\overset{\texttt{proxy}}\longleftarrow \text{diff}_{ij}= h_j - h_i, \\
    \text{node}_i\in T_p,\; \text{node}_j\in T_q, \quad \langle T_p, T_q\rangle \in \mathcal{R}.
\end{gathered}
\end{equation}
This yields a differentiable and compatible with end-to-end graph learning.
We implement the constraints in two complementary levels: the table-level and the entity-level.

First, we focus on modeling relationships at the table-level by regularizing the embedding differences associated with each table-to-table relation.
In RDBs, a FD implies a consistent transformation pattern between entities linked by the same table relation (e.g., \texttt{review}$\to$\texttt{Product}).
Instead of constraining absolute entity representations, we assume that embedding differences induced by the same relation lie in a \emph{shared low-dimensional relational subspace}.

Accordingly, we introduce a learnable projection matrix $P$ and enforce FD-related difference vectors to be well estimated with the low rank subspace spanned by $PP^T$. 
In addition, to identify the inherent asymmetry of relations, we introducing a learnable vector $s$. This leads to the table-level embedding space regularization in Eq. \eqref{loss-emb}:
\begin{equation}
\label{loss-emb}
\begin{gathered}
    \mathcal{L}_{emb}=\left\|(\text{diff}_{ij}-s)-PP^T(\text{diff}_{ij}-s)\right\|^2\\
    P\in \mathbb{R}^{n\times d}, d<n
\end{gathered}
\end{equation}

While the table-level constraint captures relational patterns shared by a give table-to-table dependency, it does not determine how individual entity-pairs should be distinguished within this pattern (e.g., \texttt{review}$_1$$\;\to\;$\texttt{Product}$_1$? or \texttt{review}$_1$$\;\to\;$\texttt{Product}$_2$?).
We therefore introduce an entity-level constraint that operates on entity pairs.

For a given table-to-table dependency, we introduce a relation specific scoring model $\mathcal M^S$ to discriminate whether an entity pair actually satisfies the corresponding dependency:
\begin{equation}
\begin{aligned}
    \text{score}_{\langle i,j\rangle} &= \mathcal{M}^S_{({T}_p, {T}_q)}(\text{diff}_{ij}) = \mathcal{M}^S_{({T}_p, {T}_q)}(h_i - h_j). \\ 
\end{aligned}
\end{equation}
To encourage instance-wise separability, we adopt a contrastive objective based on the InfoNCE loss~\cite{infonce}, which assigns higher scores to FD-consistent ($\langle pos\rangle$) pairs than to inconsistent ($\langle neg\rangle$) ones:
\begin{equation}
\label{loss-pair}
\begin{gathered}
    \mathcal{L}_{pair} = -\log \frac{d(\text{score}_{\langle pos\rangle})}{d(\text{score}_{\langle pos\rangle}) + \sum_{k} d(\text{score}_{\langle neg\rangle_k})}, \\
    d(x) = \exp(x/\tau).
\end{gathered}
\end{equation}
This objective enforces FD constraints in a manner consistent with entity relationships in relational databases.

We combine the table- and entity-level objectives using weights $\beta$ and $\gamma$ to balance their contributions (illustrated in Fig. \ref{FDLoss}), resulting in the final FD regularization term:
\begin{equation}
\label{eq:lossFD}
    \mathcal{L}_{FD} = \beta\mathcal{L}_{emb} + \gamma\mathcal{L}_{pair}.
\end{equation}

\begin{figure}[htbp]
    \centering
    \includegraphics[width=0.9\linewidth]{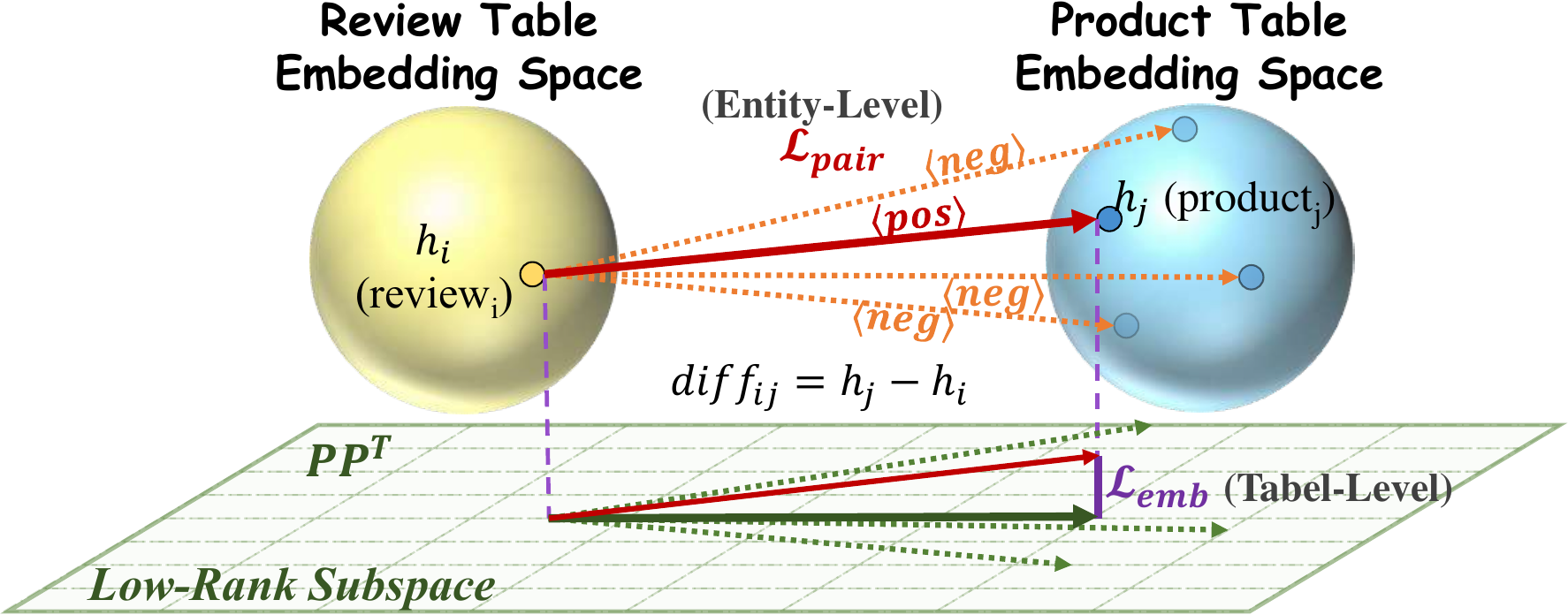}
    \caption{Illustration of $\mathcal{L}_{FD}$}
    \label{FDLoss}
\end{figure}

The FD constraints are optimized alternating manner with GNN learning, ensuring that relational dependencies are enforced throughout the representation learning process. During GNN training, the low-rank matrix $P$ and the scoring models are kept fixed, and the overall loss is defined as: 
\begin{equation}
\label{eq:lossmain}
    \mathcal{L}_{main} = \mathcal{L}_{task} + \mathcal{L}_{FD}
\end{equation}

\section{Experiment}
We design experiments to answer the following questions:
\textbf{RQ1:} Does our method outperform the current GNN baselines? ($\triangleright$ Sec. \ref{sec5-2})
\textbf{RQ2:} How does Table-as-Node/Edge affect model performance? ($\triangleright$ Sec. \ref{sec5-3})
\textbf{RQ3:} How does FD loss restrict the embedding space between different tables? ($\triangleright$ Sec. \ref{sec5-4})
\textbf{RQ4:} What information can we gain from the learned graph structure? ($\triangleright$ Sec. \ref{sec5-5})
\textbf{RQ5:} Can the structure learned by the model on the current dataset be transferred to other tasks? ($\triangleright$ Sec. \ref{sec5-6})

\subsection{Experimental Setup}
\paragraph{Datasets and Tasks.}
We evaluate our \method on Relbench~\cite{relbench}, a benchmark suite consisting of large-scale, real-world relational databases for machine learning research.
Relbench covers diverse application domains and supports multiple predictive settings on relational data, including entity classification, regression, and link prediction (recommendation). Following the official benchmark protocol, we adopt ROC-AUC~\cite{roc} for classification tasks, Mean Absolute Error (MAE) for regression tasks, and Mean Average Precision (MAP)@K for recommendation tasks as evaluation metrics.

To comprehensively evaluate the effectiveness and generalization ability of our method, we conduct experiments on 6 representative datasets spanning 23 downstream tasks. Detailed information can be found in Appendix \ref{app_data}.

\paragraph{Baselines.}
For baselines, we consider three categories of methods for relational deep learning: 
\ding{172} Statistical learning methods, including Entity Mean and Entity Median. 
\ding{173} Machine learning methods, specifically LightGBM~\cite{lightgbm}, a widely used gradient boosting framework for heterogeneous tabular data.
\ding{174} GNN-based RDL methods, including the GraphSAGE~\cite{graphsage} and ID-GNN~\cite{ID-GNN} (used only for link prediction) implementation in Relbench~\cite{relbench}, as well as RelGNN~\cite{relgnn}, a representative GNN framework specifically designed for relational databases.

Detailed information about baselines and implement methods can be found in Appendix \ref{app_baseline}.

\subsection{Results Across Datasets and Tasks}
\label{sec5-2}
We comprehensively evaluated the capabilities of \method on 23 different tasks across 6 datasets. For each task, we run five trials and report the mean and standard deviation of the results. Complete results are reported in Appendix \ref{app_exp}.

\textbf{Results:} As shown in Table \ref{tab:cls}, \ref{tab:lp} and \ref{tab:reg}, \method achieves strong performance across different tasks and datasets.
These results indicate that the role assigned to tables plays an important part, and that jointly optimizing graph structures and GNN models enables more expressive modeling, providing a competitive and effective framework for RDL.

\subsection{Ablation Study: Node- and Edge-based Structure Modeling}
\label{sec5-3}
\begin{figure}[htbp]
    \centering
    \includegraphics[width=1\linewidth]{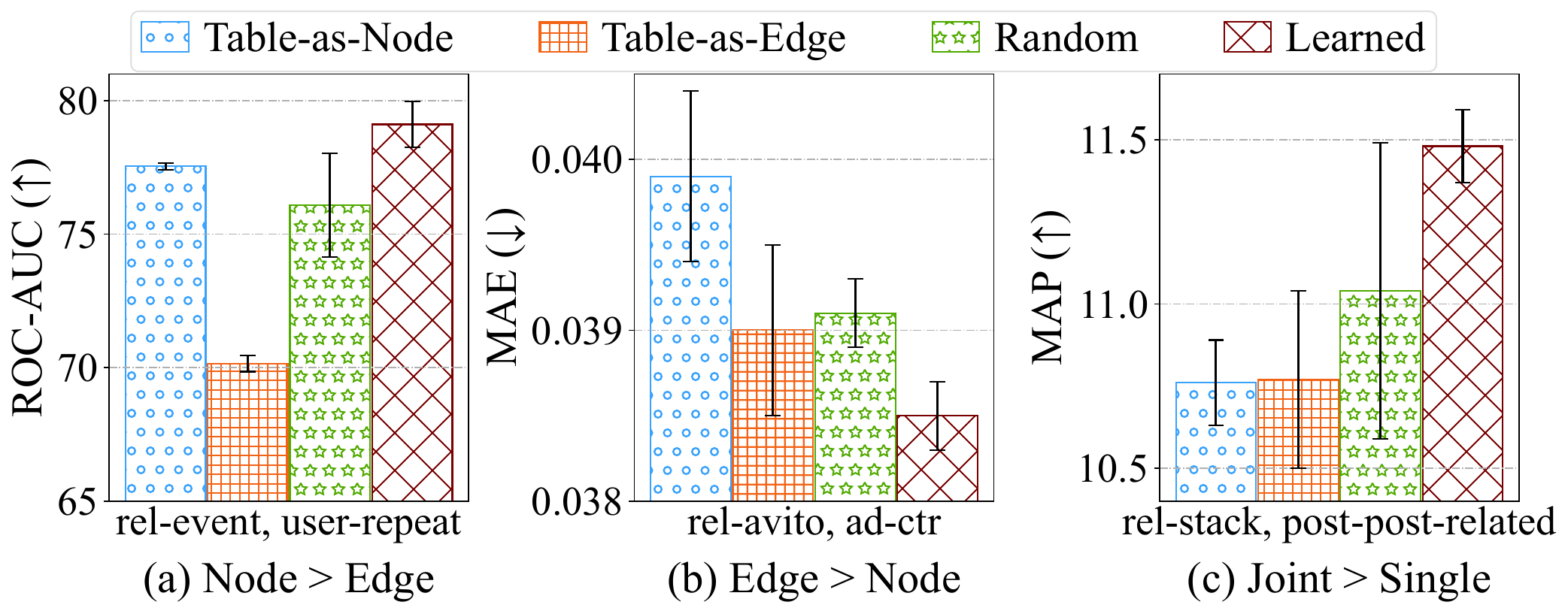}
    \caption{Experimental results about Table-as-Node or -Edge}
    \label{fig:abl}
\end{figure}
To further investigate the impact of modeling tables-as-node or tables-as-edge under specific relations, we design three model variants to analyze the effects of different table roles: 
\ding{172} Modeling all tables as nodes;
\ding{173} Modeling all eligible tables as edges whenever possible; and 
\ding{174} Using a random weight to balance the table-as-node and table-as-edge representations. 
Detailed results are provided in Appendix \ref{app_abl}.

\textbf{Results:} The results in Fig. \ref{fig:abl} indicate that the role assigned to tables has a substantial impact on model performance.
Depending on the specific setting, modeling tables as nodes, as edges, or jointly can be preferable.
Nevertheless, our structure learning approach consistently achieves the best performance, highlighting the effectiveness of learning graph structures under the full-resolution restriction.

\subsection{FD Constraints in Embedding Space}
\label{sec5-4}
By introducing the $\mathcal{L}_{FD}$ loss, \method enforces FD constraints in the embedding space.
To examine this effect, we compute and visualize $\text{diff}_{ij}$ for entity pairs connected by FPK relations in RDBs with complex relations.
We further focus on a single relation type and separately visualize pairs that satisfy the FD constraint and those that do not (\ie, valid FPK pairs versus mismatched entity pairs). 
We select two datasets, rel-event, which contains multi-relations between the same pair of tables, and rel-avito, which exhibits more complex relational structures. The Schema Graphs of both datasets are provided in Appendix \ref{app_SG}.

\begin{figure}[htbp]
    \centering
    \includegraphics[width=1\linewidth]{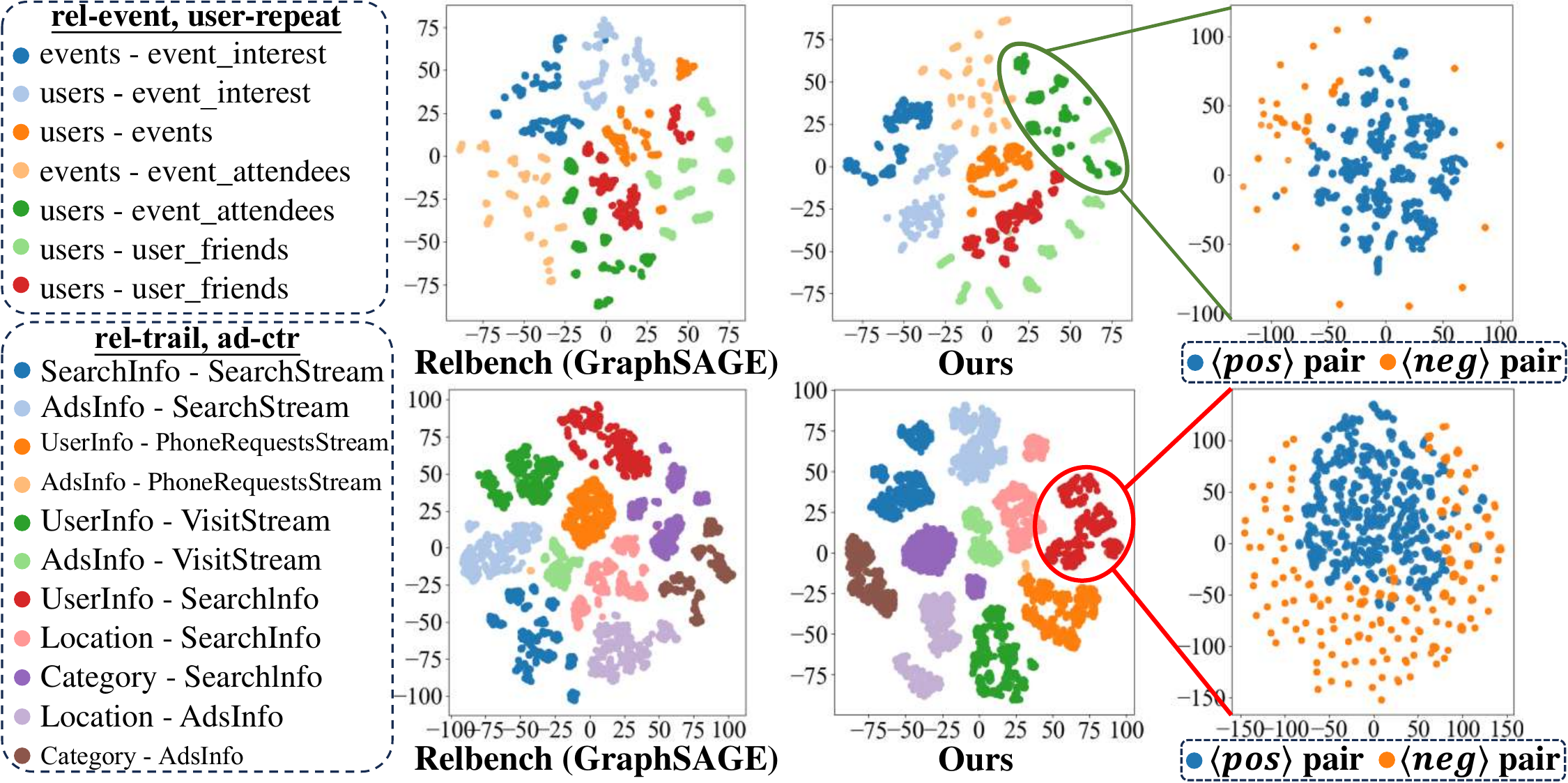}
    \caption{Visualization of relational patterns learned by the model}
    \label{fig:FD}
\end{figure}

\textbf{Results:} The results in Fig. \ref{fig:FD} show that our method yields more compact and better-separated relational representations, supporting the effectiveness of $\mathcal{L}_{emb}$.
Moreover, for each relation type, the visualization reveals that matched entity pairs form tightly clustered distributions, while mismatched pairs are more dispersed, highlighting the role of $\mathcal{L}_{pair}$.
Together, these observations indicate that the proposed losses jointly enforce functional dependency constraints in the learned representations. For more detailed numerical results, the ablation study on the $\mathcal{L}_{FD}$ can be found in Appendix \ref{app_L}

\subsection{Statistical Properties of the Learned Structure}
\label{sec5-5}
After training, in addition to the model parameters, our approach also yields the learned graph structure.
In this section, we conduct a statistical analysis to investigate the role patterns of tables across different relation types. Detailed results across datasets are provided in the Appendix \ref{app_GSL_my}.

\textbf{Results:} From the empirical results shown in Fig.~\ref{fig:sec55}, we make the following key observations:

\ding{172} In the Co-Occurrence paradigm, intermediate nodes tend to be modeled as \emph{nodes} for classification and regression tasks, while they are more naturally modeled as \emph{edges} for link prediction tasks.

\ding{173} In the Co-Occurrence paradigm, intermediate node's roles are more likely to \emph{remain consistent} in different directions (\ie, $(u\to w) \;\&\; (w\to u)$), thereby preserving stable semantic relationships during message passing.

\ding{174} In the Completion paradigm, intermediate tables exhibit a clearer and more consistent preference for modeling as node or edge roles across different datasets.

\begin{figure}[htbp]
    \centering
    \includegraphics[width=\linewidth]{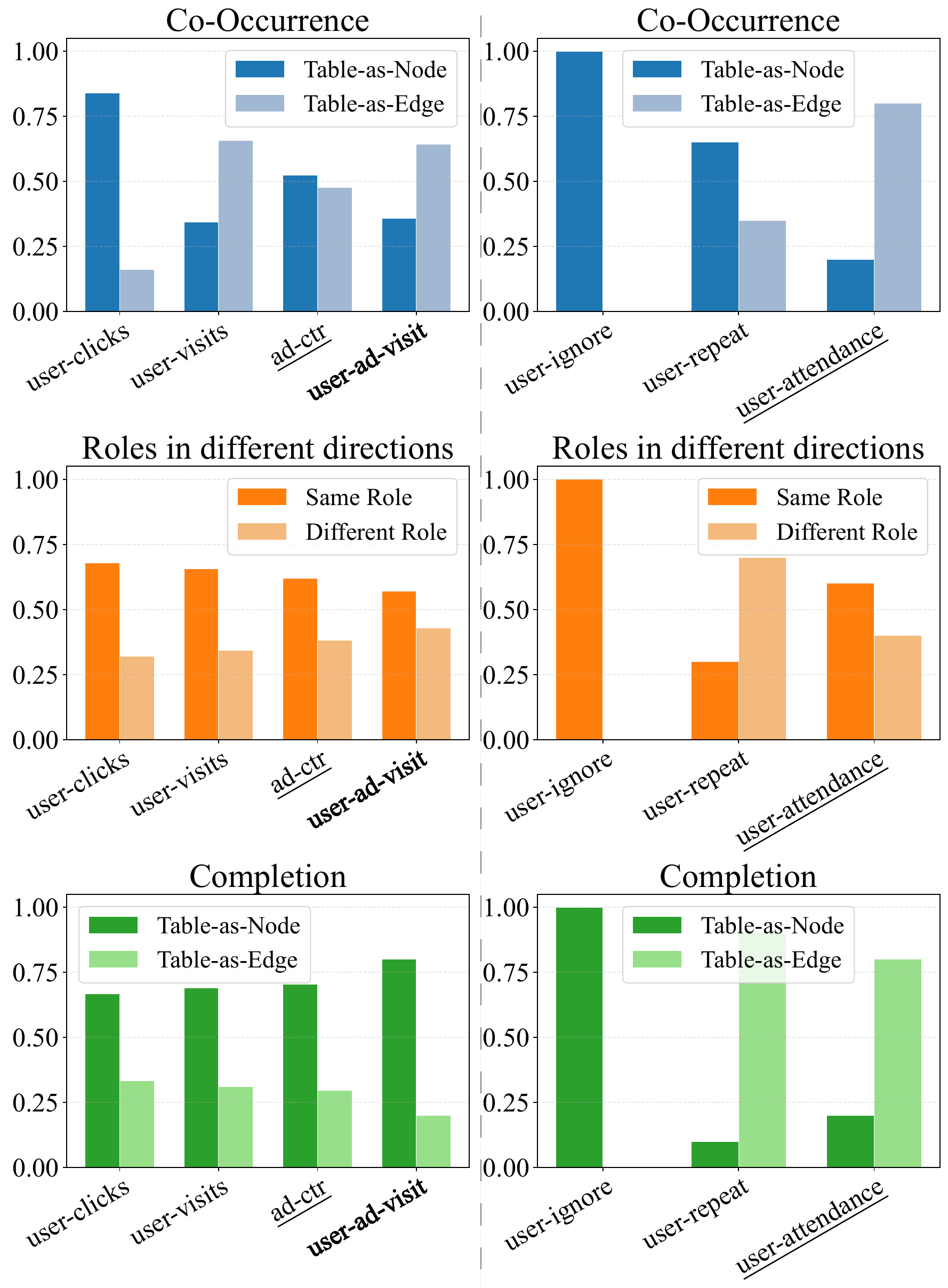}
    \caption{Structure Comparison Across Different Task Types on rel-avito (Left) and rel-event (Right), Including Classification, \underline{Regression}, and \textbf{Link Prediction} Tasks}
    \label{fig:sec55}
\end{figure}

\subsection{Structural Transferability}
\label{sec5-6}
To study the transferability of learned structures across tasks, we conduct experiments on each dataset by training each task with structures learned from other tasks. We report results on rel-event and rel-avito as shown in Tables~\ref{tab:rel_event_transfer} and \ref{tab:rel_avito_transfer}, where the $(i,j)$-th entry denotes the performance of task $i$ using the structure learned from task $j$.

\begin{table}[htbp]
\centering
\caption{Performance Comparison on rel-event with Different Learned Structures. The best results are highlighted in \textbf{bold}, and the second-best results are \underline{underlined}. \colorbox{red!15}{\textcolor{red!15}{\rule{0.6em}{0.4em}}} and \colorbox{green!25}{\textcolor{green!25}{\rule{0.6em}{0.4em}}} denote metrics evaluated by ROC-AUC $\uparrow$ and MAE $\downarrow$.}
\label{tab:rel_event_transfer}
\resizebox{\linewidth}{!}{
\begin{tabular}{c|ccc}
\toprule
\textbf{Task} & \textbf{user-repeat} & \textbf{user-ignore} & \textbf{user-attendance} \\
\midrule
\cellcolor{red!15} user-repeat
& \textbf{79.01} 
& \underline{76.30} 
& 74.31 \\
\midrule

\cellcolor{red!15} user-ignore
& 83.32 
& \textbf{84.59} 
& \underline{84.39} \\
\midrule

\cellcolor{green!25} user-attendance 
& \underline{0.2635}
& \underline{0.2635} 
& \textbf{0.2435} \\
\bottomrule
\end{tabular}
}
\end{table}

\begin{table}[htbp]
\centering
\caption{Performance Comparison on rel-avito with Different Learned Structures. The best results are highlighted in \textbf{bold}, and the second-best results are \underline{underlined}. \colorbox{red!15}{\textcolor{red!15}{\rule{0.6em}{0.4em}}}, \colorbox{green!25}{\textcolor{green!25}{\rule{0.6em}{0.4em}}} and \colorbox{blue!15}{\textcolor{blue!15}{\rule{0.6em}{0.4em}}} denote metrics evaluated by ROC-AUC $\uparrow$, MAE $\downarrow$ and MAP $\uparrow$.}
\label{tab:rel_avito_transfer}
\resizebox{\linewidth}{!}{
\begin{tabular}{c|cccc}
\toprule
\textbf{Task} & \textbf{user-visits} & \textbf{user-clicks} & \textbf{ad-ctr} & \textbf{user-ad-visit} \\
\midrule
\cellcolor{red!15} user-visits
& \underline{66.37} 
& 66.18 
& 66.36 
& \textbf{66.54} \\
\midrule

\cellcolor{red!15} user-clicks
& 65.58 
& \textbf{67.15} 
& 65.52 
& \underline{66.03} \\
\midrule

\cellcolor{green!25} ad-ctr
& 0.0390 
& \textbf{0.0380} 
& \underline{0.0385} 
& 0.0386 \\
\midrule

\cellcolor{blue!15} user-ad-visit
& 3.84 
& \textbf{3.88} 
& 3.86 
& \underline{3.87} \\
\bottomrule
\end{tabular}
}
\end{table}

\textbf{Results:} The results show that learned table roles exhibit effective cross-task transferability, achieving competitive performance even when transferred across tasks. In some cases, transferred structures even lead to improved performance, suggesting that structures learned from different tasks may capture complementary relational patterns and shared structural characteristics.

\section{Conclusion}
In this work, we propose \textbf{\method}, a \textit{\textbf{F}ull-\textbf{R}esolution  and \textbf{O}ptimizable \textbf{G}raph Structure Learning} framework for RDL.
We formalize the full-resolution requirement for graph modeling in RDL from an information-theoretic perspective, and show how it can be satisfied through table-as-node and table-as-edge representations.
To make optimization tractable, we reinterpret structural choices as differences in message passing behavior within GNNs and design relation-driven convolution operators accordingly.
Furthermore, we introduce the $\mathcal{L}_{FD}$ to enforce functional dependencies in RDBs.
Extensive experiments demonstrate that \method achieves competitive performance, offering a new perspective on jointly optimizing graph structures and GNNs for RDL.

\section*{Acknowledgements}
The corresponding author is Qingyun Sun. This work is supported by NSFC under grants No.62225202 and No.62302023, and by the Fundamental Research Funds for the Central Universities. We extend our sincere thanks to all reviewers for their valuable efforts.

\section*{Impact Statement}
This paper presents work whose goal is to advance the field of Machine Learning. There are many potential societal consequences of our work, none which we feel must be specifically highlighted here.

\bibliography{ref}
\bibliographystyle{icml2026}

\newpage
\appendix
\onecolumn
\renewcommand{\contentsname}{Appendix Contents}
\let\addcontentsline\OriginalAddContentsLine
\tableofcontents
\newpage
\section{Datasets, Tasks and Baselines}
\label{appendix-data}
\subsection{Datesets \& Tasks}
\label{app_data}
\begin{table}[htbp]
  \centering
  \caption{Task Details}
  \resizebox{\textwidth}{!}{
  \begin{tabular}{lllcccrr}
    \toprule
      \multirow{2}{*}{\textbf{Dataset}} & \multirow{2}{*}{\textbf{Task name}} & \multirow{2}{*}{\textbf{Task type}} & \multicolumn{3}{c}{\textbf{\#Rows of training table}} & \#Unique & \%train/test  \\
      & & & Train & Validation & Test & Entities & Entity Overlap   \\
    \midrule
    \multirow{4}{*}{\textit{rel-avito}}
    & \texttt{user-clicks} & classification & 59,454 & 21,183 & 47,996 & 66,449 & 45.3 \\
    & \texttt{user-visits} & classification & 86,619 & 29,979 & 36,129 & 63,405 & 64.6 \\
    & \texttt{ad-ctr} & regression & 5,100 & 1,766 & 1,816 & 4,997 & 59.8 \\
    & \texttt{user-ad-visit} & recommendation & 86,616 & 29,979 & 36,129 & 63,402 & 64.6 \\
    \midrule
    \multirow{3}{*}{\textit{rel-event}}
    & \texttt{user-repeat} & classification & 3,842 & 268 & 246 &1,514 & 11.5\\
    & \texttt{user-ignore} & classification & 19,239 & 4,185 & 4,010 &9,799 & 21.1\\
    & \texttt{user-attendance} & regression & 19,261 & 2,014 & 2,006 &9,694 & 14.6\\
    \midrule
    \multirow{3}{*}{\textit{rel-f1}}
      & \texttt{driver-dnf} & classification & 11,411 & 566 & 702 & 821 & 50.0  \\
      & \texttt{driver-top3} & classification  & 1,353 & 588 & 726 & 134 & 50.0  \\
      & \texttt{driver-position} & regression & 7,453 & 499 & 760 & 826 & 44.6  \\
    \midrule
    \multirow{3}{*}{\textit{rel-hm}}
      & \texttt{user-churn} & classification  & 3,871,410 & 76,556 & 74,575 & 1,002,984 & 89.7  \\
      & \texttt{item-sales} & regression & 5,488,184 & 105,542 & 105,542 & 105,542 & 100.0  \\
      & \texttt{user-item-purchase} & recommendation & 3,878,451 & 74,575 & 67,144 & 1,004,046 & 89.2 \\
    \midrule
    \multirow{5}{*}{\textit{rel-stack}}
      & \texttt{user-engagement} & classification & 1,360,850 & 85,838 & 88,137 & 88,137 & 97.4  \\
      & \texttt{user-badge} & classification  & 3,386,276 & 247,398 & 255,360 & 255,360 & 96.9  \\
      & \texttt{post-votes} & regression  & 2,453,921 & 156,216 & 160,903 & 160,903 & 97.1  \\
      & \texttt{user-post-comment} & recommendation  & 21,239 & 825 & 758 & 11,453 & 59.9 \\
      & \texttt{post-post-related} & recommendation  & 5,855 & 226 & 258 & 5,924 & 8.5  \\
    \midrule
    \multirow{5}{*}{\textit{rel-trial}}
      & \texttt{study-outcome} & classification  & 11,994 & 960 & 825 & 13,779 & 0.0  \\
      & \texttt{study-adverse} & regression  & 43,335 & 3,596 & 3,098 & 50,029 & 0.0  \\
      & \texttt{site-success} & regression  & 151,407 & 19,740 & 22,617 & 129,542 & 42.0  \\
      & \texttt{condition-sponsor-run} & recommendation  & 36,934 & 2,081 & 2,057 & 3,956 & 98.4 \\
      & \texttt{site-sponsor-run} & recommendation  & 669,310 & 37,003 & 27,428 & 445,513 & 48.3  \\
   \bottomrule
  \end{tabular}
  }
\end{table}
Relbench~\cite{relbench} offered a collection of large-scale, real-world datasets for machine learning on relational databases. Notably, it defines different tasks across datasets, including Entity Classification, Entity Regression, and Entity Recommendation. 
In this study, we select 6 datasets covering 23 tasks for evaluation. 
A detailed description of the datasets and tasks is provided below.

\begin{itemize}
\setlength{\itemsep}{0pt}
    \item \textit{rel-avito}: Originating from the Avito Kaggle challenge, this dataset is collected from a large online advertisement platform and is designed to predict ad click behavior based on contextual information. It includes user search logs, ad attributes, and related data across categories such as real estate and vehicles.
    \begin{itemize}
    \setlength{\itemsep}{0pt}
        \item \texttt{user-visits}: Predict whether a user is likely to browse more than one distinct Ads within the upcoming 4-day window.
        \item \texttt{user-clicks}: Predict  whether a user will generate click interactions on more than one Ads over the next 4 days.
        \item \texttt{ad-ctr}: Predict the click-through rate of an Ad, conditioned on it being clicked within the next 4 days.
        \item \texttt{user-ad-visit}: Predict the list of Ads a user will visit within the next 4 days.
    \end{itemize}
    \item \textit{rel-event}: Sourced from the Hangtime mobile app, this dataset contains user activity data, event metadata, demographic information, and social relationships, capturing how social connections relate to event participation. All data is fully anonymized, with no personally identifiable information included.
    \begin{itemize}
    \setlength{\itemsep}{0pt}
        \item \texttt{user-repeat}: Predict whether a user will express intent to attend an event (responding "yes" or "maybe") in the next week, conditioned on their participation in at least one event during the preceding 14 days.
        \item \texttt{user-ignore}: Predict whether a user will ignore more than two event invitations over the next 7 days.
         \item \texttt{user-attendance}: Predict the aggregate count of events for which a user will indicate attendance intent (responding "yes" or "maybe") within the upcoming 7-day window.
    \end{itemize}
    \item \textit{rel-f1}: This dataset provides comprehensive Formula 1 racing data and statistics since 1950, covering drivers, constructors, and circuits. It includes detailed circuit information and full historical season data, such as race results, standings, qualifying sessions, and pit stop records.
    \begin{itemize}
    \setlength{\itemsep}{0pt}
        \item \texttt{driver-dnf}: Predict the likelihood of a driver failing to complete a race within the next month.
        \item \texttt{driver-top3}: Predict whether a driver will achieve the top-3 in any race during the next month.
        \item \texttt{driver-position}: Predict the mean finishing rank of a driver across all races scheduled in the next 2 months.
    \end{itemize}
    \item \textit{rel-hm}: Derived from H\&M transaction logs, this dataset records large-scale purchasing behaviors within the global fashion retail sector. It details the bipartite interactions between customers and articles enriched with extensive side information such as product design attributes and user demographics to support preference modeling tasks.
    \begin{itemize}
    \setlength{\itemsep}{0pt}
        \item \texttt{user-churn}: Predict whether a customer will enter a churn state (defined as zero transactional activity) in the subsequent week.
        \item \texttt{item-sales}: Predict the e total sales for a specific article over the next week.
        \item \texttt{user-item-purchase}: Predict the sequence of articles a customer is expected to purchase within the next 7 days.
    \end{itemize}
    \item \textit{rel-stack}: Extracted from the stats-exchange site, this dataset represents a reputation-based question-and-answer network. It includes detailed records of user activity such as posts and comments with raw text, edit histories, voting behavior, and related posts.
    \begin{itemize}
    \setlength{\itemsep}{0pt}
        \item \texttt{user-engagement}: Predict if a user will demonstrate active engagement (via voting, posting, or commenting) at any point in the next 3 months.
        \item \texttt{user-badge}: Predict whether a user will acquire a new badge within the next 3 months.
        \item \texttt{post-votes}: Predict the number of votes each user post will receive in the next 3 months.
        \item \texttt{user-post-comment}: Predict the specific existing posts on which a user will leave a comment in the next 2 years.
        \item \texttt{post-post-related}: Predict the set of existing posts that will form a linkage to a target post within the next 2 years.
    \end{itemize}
    \item \textit{rel-trial}: Curated from the AACT initiative, this dataset consolidates protocols and results from global clinical studies. It links sponsors, medical conditions, and facilities with detailed trial attributes such as study designs and outcomes to facilitate predictive research in healthcare operations and risk assessment.
    \begin{itemize}
    \setlength{\itemsep}{0pt}
        \item \texttt{study-outcome}: Predict whether a clinical trial will successfully meet its primary endpoint objectives within the next year.
        \item \texttt{study-adverse}: Predict the incidence count of patients suffering from severe adverse events or mortality in a trial within the next year.
        \item \texttt{site-success}: Predict the overall success rate associated with a specific trial site in the next year.
        \item \texttt{condition-sponsor-run}: Predict which sponsors will be formally associated with a specific medical condition.
        \item \texttt{site-sponsor-run}: Predict whether a sponsor will use a facility as a clinical trial site.
    \end{itemize}
\end{itemize}

\subsection{Baselines}
\label{app_baseline}
In our work, we select three categories of methods as our baselines:
\begin{enumerate}[label=(\roman*)]
\setlength{\itemsep}{0pt}
    \item Statistical learning method. 
    \begin{itemize}
    \setlength{\itemsep}{0pt}
        \item Entity Regression
        \begin{itemize}
        \setlength{\itemsep}{0pt}
            \item Entity Mean: Compute the average label for each entity in the training set and uses it as the predicted value for that entity.
            \item Entity Median: Compute the median label for each entity in the training set and uses it as the predicted value for that entity.
        \end{itemize}
        \item Entity Recommendation
        \begin{itemize}
        \setlength{\itemsep}{0pt}
            \item Global Popularity: Compute the top-$K$ most frequent target entities across the entire training table, and predict these globally popular targets for all source entities.
            \item Past Visit: Compute a source-specific top-$K$ list of historically visited target entities from the training table and use them for prediction.
        \end{itemize}
    \end{itemize}
    \item Machine learning method.
    \begin{itemize}
    \setlength{\itemsep}{0pt}
        \item LightGBM~\cite{lightgbm}: A gradient-boosted decision tree model that efficiently handles large-scale tabular data. It supports categorical features natively, offers fast training through histogram-based optimization, and automatically handles feature interactions.
    \end{itemize}
    \item GNN-based RDL methods.
    \begin{itemize}
    \setlength{\itemsep}{0pt}
        \item RDL (GraphSAGE)~\cite{graphsage,relbench}: A GraphSAGE-based heterogeneous graph neural network~\cite{HGNN}. For each relation type, a corresponding GNN is selected to perform message passing and convolution. The resulting representations are then aggregated to obtain node embeddings, which are fed into different downstream task-specific heads to produce the final predictions.
        \item RDL (ID-GNN)~\cite{ID-GNN,relbench}: Only used in link prediction (recommendation) tasks. An identity-aware GNN (ID-GNN) is used as the backbone to capture node identity information. For each source entity, target entity embeddings computed by the GNN are fed into an MLP prediction head to produce pairwise scores. The ID-GNN is trained using the standard binary cross-entropy loss.
        \item RelGNN~\cite{relgnn}: RelGNN introduced the concept of atomic paths, which represent composite relations spanning multiple tables and encode richer semantic information. It performs convolution along atomic paths with multi-head attention~\cite{attn}, enabling the model to capture semantic information more comprehensively.
    \end{itemize}
\end{enumerate}

\subsection{Baseline Implementation Details}
For GraphSAGE and ID-GNN, we trained them according to the hyperparameters specified in Relbench.

For RelGNN, we searched for the optimal learning rate while keeping all other hyperparameters consistent with those in the original paper, and reported the results under the optimal learning rate.

\section{Theoretical Analysis of Full-Resolution}
\subsection{Full-Resolution from an Information Perspective}
\begin{definition}[Full Resolution from an Information-Theoretic Perspective]
    Let $\mathcal{X}$ denote a RDB consisting of tables and FPKs. 
    A REG $\mathcal{G}$ constructed from the schema graph is defined as \emph{full-resolution} with respect to $\mathcal{X}$ if and only if:
    \begin{equation}
        H(\mathcal{X} \mid \mathcal{G}) = 0.
    \end{equation}
    This condition implies that $\mathcal{G}$ preserves all information contained in $\mathcal{X}$ through the schema graph, ensuring that the RDB can be uniquely recovered from $\mathcal{G}$ without uncertainty.
\end{definition}

\subsection{Graph Structure Learning Without Structural Provenance Is NOT Full-Resolution}
\label{app_GSL}
\begin{proposition}[Non-Invertibility of Edge Pruning]
\label{app_prop1}
    Let $\mathcal{G}_{orig} = (V, E)$ be the initial full-resolution graph of the RDB $\mathcal{X}$. Let $\Phi_{prune}: \mathcal{G}_{orig} \to \mathcal{G}_{sub}$ be an edge pruning operator such that $\mathcal{G}_{sub} = (V, E')$ with $E' \subset E$. If the pruning criterion is not preserved as part of the graph's metadata, then $\mathcal{G}_{sub}=\Phi_{prune}(\mathcal{G}_{orig})$ is not full-resolution:
    \begin{equation}
        H(\mathcal{X} \mid \mathcal{G}_{sub}) > 0.
    \end{equation}
\end{proposition}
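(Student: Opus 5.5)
We argue by exhibiting a collision: two distinct relational states of $\mathcal{X}$ that are compatible with the same observed $\mathcal{G}_{sub}$. Positive conditional entropy then follows once the underlying distribution over RDB states is specified.

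\textbf{Step 1: use full-resolution of the original graph.} Since $\mathcal{G}_{orig}$ is full-resolution, $H(\mathcal{X}\mid\mathcal{G}_{orig})=0$, so $\mathcal{X}$ is a deterministic function of $\mathcal{G}_{orig}$. In particular each FPK link of $\mathcal{X}$ corresponds to an edge in $E$. Because $E'\subsetneq E$, at least one link $e^\ast\in E\setminus E'$ is absent from $\mathcal{G}_{sub}$.

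\textbf{Step 2: build the collision.} Let $\mathcal{X}'$ be the database obtained from $\mathcal{X}$ by deleting, or re-targeting, the foreign key realized by $e^\ast$. For example, set it to null, or point it to another valid primary key, depending on schema constraints. Its full-resolution graph is $\mathcal{G}'_{orig}=(V,E\setminus\{e^\ast\})$, or $(V,(E\setminus\{e^\ast\})\cup\{e'\})$.

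The key point is that the pruning criterion is not stored as metadata. Hence the observer of $\mathcal{G}_{sub}$ cannot tell whether an edge missing between two entities was pruned or never existed. We therefore choose a pruning operator (or a criterion consistent with both inputs) under which $\Phi_{prune}(\mathcal{G}'_{orig})=(V,E')=\mathcal{G}_{sub}$. For instance, take any criterion that prunes $e'$ in the second case. The map $\mathcal{X}\mapsto\Phi_{prune}(\mathcal{G}_{orig}(\mathcal{X}))$ is then non-injective on the support.

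\textbf{Step 3: convert to entropy.} Assume the distribution over database states gives positive probability to both $\mathcal{X}$ and $\mathcal{X}'$; a generic prior over FPK configurations suffices. Then the posterior $p(\mathcal{X}\mid\mathcal{G}_{sub})$ places mass on at least two distinct values, so $H(\mathcal{X}\mid\mathcal{G}_{sub})>0$. Equivalently, the data processing inequality gives $H(\mathcal{X}\mid\mathcal{G}_{sub})\ge H(\mathcal{X}\mid\mathcal{G}_{orig})=0$, and strict inequality holds because the processing is non-invertible on a set of positive probability.

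\textbf{Main obstacle.} The hardest part is Step 2: making precise what ``the pruning criterion is not preserved'' means, so that it genuinely forces a collision rather than letting a clever deterministic criterion remain invertible. I would formalize it as follows. The decoder has access only to $(V,E')$ and not to $E\setminus E'$, which is treated as unknown side information with nondegenerate uncertainty. The argument then reduces to showing that $E\setminus E'$, and hence the pruned FK values, cannot be recovered from $(V,E')$ alone. This holds because the features in $V$ do not encode foreign-key pointers, since FPK links are carried by edges.
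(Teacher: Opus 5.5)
Your proposal is correct and follows essentially the same route as the paper. The paper reduces to showing $H(\mathcal{G}_{orig}\mid\mathcal{G}_{sub})>0$, using that $\mathcal{X}$ and $\mathcal{G}_{orig}$ determine each other, and exhibits the collision $\Phi_{prune}(V,E_0\cup\{e_i\})=\Phi_{prune}(V,E_0\cup\{e_j\})=(V,E_0)$, which is essentially your retargeting construction, except that you work with $\mathcal{X}$ directly. Two of your additions make explicit what the paper leaves implicit when it passes from a preimage of size at least two to $\max_g\mathbb{P}(\mathcal{G}_{orig}=g\mid\mathcal{G}_{sub})<1$: your requirement that both colliding states have positive probability, and your reading of ``criterion not preserved'' as nondegenerate uncertainty about $E\setminus E'$.
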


\begin{proof}
    Since $\mathcal{G}_{orig}$ satisfies full-resolution with respect to $\mathcal{X}$, $H(\mathcal{X} \mid \mathcal{G}_{orig})=0$ holds. Moreover, as $\mathcal{G}_{orig}$ is constructed via $\mathcal{X}$, $H(\mathcal{G}_{orig} \mid \mathcal{X})=0$ follows. Hence, to prove $H(\mathcal{X} \mid \mathcal{G}_{sub}) > 0$, it suffices to show $H(\mathcal{G}_{orig} \mid \mathcal{G}_{sub}) > 0$.
    
    Consider the pruning operator $\Phi_{prune}:(V,E)\to (V,E')$ with $E'\subset E$, where the pruning criterion is not retained as metadata. Then $\Phi_{prune}$ is not injective.
    
    Specifically, let $E_0\subset E$ and choose two distinct edges $e_i\neq e_j$ with $e_i,e_j\notin E_0$. Define:
    \begin{equation}
        \mathcal{G}_1 = (V,E_0 \cup \{e_i\}), \quad \mathcal{G}_2 = (V,E_0 \cup \{e_j\}).
    \end{equation}
    Under the same pruning rule, both edges are removed, \ie,
    \begin{equation}
        \Phi_{prune}(\mathcal{G}_1) = \Phi_{prune}(\mathcal{G}_2) = (V,E_0) = \mathcal{G}_{sub}, \quad \mathcal{G}_1\neq\mathcal{G}_2.
    \end{equation}
    
    Since $|\Phi_{prune}^{-1}(\mathcal{G}_{sub})|\ge 2$, the original graph $\mathcal{G}_{orig}$ is not uniquely determined by $\mathcal{G}_{sub}$. Therefore, $\max_{g}\mathbb{P}(\mathcal{G}_{orig}=g\mid \mathcal{G}_{sub}) < 1$, which implies:
    \begin{equation}
        H(\mathcal{G}_{orig} \mid \mathcal{G}_{sub}) > 0.
    \end{equation}
\end{proof}

\begin{proposition}[Indistinguishability of Edge Addition]
\label{app_prop2}
    Let $\mathcal{G}_{orig} = (V, E)$ be the initial full-resolution graph of the RDB $\mathcal{X}$. Let $\Phi_{add}: \mathcal{G}_{orig} \to \mathcal{G}_{aug}$ be a deterministic edge addition operator that generates an augmented graph $\mathcal{G}_{aug} = (V, E \cup \Delta E)$, where $\Delta E$ represents inferred edges. If $E$ and $\Delta E$ are indistinguishable in the output graph, then $\mathcal{G}_{aug}=\Phi_{add}(\mathcal{G}_{orig})$ is not full-resolution:
    \begin{equation}
        H(\mathcal{X} \mid \mathcal{G}_{aug}) > 0.
    \end{equation}
\end{proposition}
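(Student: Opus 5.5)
I will follow the same template as the proof of Proposition~\ref{app_prop1}: reduce the statement about $\mathcal{X}$ to one about $\mathcal{G}_{orig}$, and then exhibit a failure of injectivity. Because $\mathcal{G}_{orig}$ is full-resolution, $H(\mathcal{X}\mid\mathcal{G}_{orig})=0$. Because $\mathcal{G}_{orig}$ is built deterministically from $\mathcal{X}$, also $H(\mathcal{G}_{orig}\mid\mathcal{X})=0$. So $\mathcal{X}$ and $\mathcal{G}_{orig}$ determine each other. By the chain rule,
\begin{equation}
H(\mathcal{X}\mid\mathcal{G}_{aug}) = H(\mathcal{G}_{orig}\mid\mathcal{G}_{aug}),
\end{equation}
since $H(\mathcal{X},\mathcal{G}_{orig}\mid\mathcal{G}_{aug})$ can be expanded in either order and both cross terms vanish. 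It therefore suffices to show $H(\mathcal{G}_{orig}\mid\mathcal{G}_{aug})>0$.

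Next I will formalize ``$E$ and $\Delta E$ are indistinguishable.'' I take it to mean that $\mathcal{G}_{aug}$ carries only the edge set $E\cup\Delta E$, with no provenance labels, and that the edges of $\Delta E$ have the same admissible types, features and directions as genuine FPK edges. Under this reading the map $\mathcal{G}_{orig}\mapsto\mathcal{G}_{aug}$ factors through the union $E\cup\Delta E$.

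The key step is a non-injectivity witness, built by analogy with $\mathcal{G}_1,\mathcal{G}_2$ in Proposition~\ref{app_prop1}. Pick an inferred edge $e\in\Delta E\setminus E$. Consider a second admissible RDB $\mathcal{X}'$ whose full-resolution graph is $\mathcal{G}'=(V,E\cup\{e\})$: it has the same entities and one extra foreign-key link realizing $e$, which is admissible because $e$ is type-compatible by indistinguishability. Then argue that $\Phi_{add}(\mathcal{G}')=(V,E\cup\Delta E)=\Phi_{add}(\mathcal{G}_{orig})$, while $\mathcal{G}'\neq\mathcal{G}_{orig}$. This gives $|\Phi_{add}^{-1}(\mathcal{G}_{aug})|\ge 2$. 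Assuming both preimages have positive probability under the data distribution, we get $\max_g \mathbb{P}(\mathcal{G}_{orig}=g\mid\mathcal{G}_{aug})<1$, hence the conditional entropy is strictly positive.

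The main obstacle is justifying $\Phi_{add}(\mathcal{G}')=\mathcal{G}_{aug}$. A generic deterministic operator need not map $\mathcal{G}'$ to the same output, because adding $e$ to the input could change which edges get inferred. I will handle this in one of two ways.

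\begin{itemize}
\item \textbf{Monotonicity assumption.} Assume $\Phi_{add}$ is monotone and idempotent on inferred edges: if $E\subseteq E''\subseteq E\cup\Delta E$, then $\Phi_{add}(E'')=E\cup\Delta E$. This holds for closure-style inference rules such as transitive or co-occurrence completion.
\item \textbf{Pure counting argument.} Read ``indistinguishable'' as saying that the output does not reveal which subset of $E\cup\Delta E$ was the input. Then the decoder must choose among at least the two candidates $E$ and $E\cup\{e\}$, which already gives positive entropy.
\end{itemize}

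I will state explicitly which reading is adopted. I will also note the degenerate case $\Delta E\subseteq E$, where nothing is added; there the hypothesis implicitly presumes $\Delta E\not\subseteq E$.
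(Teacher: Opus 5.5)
Your proposal follows the paper's route. You reduce to $H(\mathcal{G}_{orig}\mid\mathcal{G}_{aug})>0$ because $\mathcal{X}$ and $\mathcal{G}_{orig}$ determine each other, and then conclude from non-injectivity of $\Phi_{add}$. The paper merely asserts that non-injectivity (``multiple different original graphs'' yield the same augmented graph), whereas you make explicit what it silently relies on: the witness $(V,E\cup\{e\})$, closure-type monotonicity of $\Phi_{add}$, and positive probability of both preimages.

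One correction: type-compatibility of $e$ does not by itself make $E\cup\{e\}$ the graph of a valid RDB, since a foreign-key attribute is single-valued and a row that already references one entity cannot gain a second link of that type. It is your positive-probability assumption on $\mathcal{G}'$ that actually carries that step.
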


\begin{proof}
    Sa in Proposition~\ref{app_prop1}, it is sufficient to show $H(\mathcal{G}_{orig} \mid \mathcal{G}_{aug}) > 0$.
    Since there are multiple different original graphs, the same extended graph can still be generated after adding edges. If original and inferred edges are indistinguishable, $\mathcal{G}_{aug}$ cannot uniquely determine $\mathcal{G}_{orig}$, which implies: $H(\mathcal{G}_{orig}\mid \mathcal{G}_{aug})>0$.
\end{proof}

Together, these propositions show that structural learning which \emph{only modifies graph topology without preserving the distinction between original and modified structures} is inherently information-losing and therefore cannot get a full-resolution representation.

\subsection{Full-Resolution of Table-as-Node and Table-as-Edge Operations}
\label{app-FR-NE}
\begin{lemma}[Table-as-Node Mapping is Full-Resolution]
\label{lem:table_as_node}
    Let $T = \{\tau_1, \tau_2, \dots, \tau_n\}$ be a relational table where each tuple $\tau_i$ is associated with an attribute vector $A(\tau_i)$. Let $\mathcal{R}_T$ denote the set of Foreign-Primary Key (FPK) links involving tuples in $T$. 
    
    Define the table-as-node mapping $f_n: (T, \mathcal{R}_T) \to (V, E, X_V)$ such that:
    \vspace{-1em}
    \begin{itemize}
    \setlength{\itemsep}{0pt}
        \item Each tuple $\tau_i \in T$ is mapped to a unique node $v_i \in V$ with features $x_{v_i} = A(\tau_i) \in X_V$, preserving the original node type.
        \item Each FPK relation $\ell_{i \to j} \in \mathcal{R}_T$ is mapped to a directed, typed edge $e_{i \to j} = (v_i, v_j) \in E$.
    \end{itemize}
    \vspace{-1em}
    Then the mapping $f_n$ is full-resolution.
\end{lemma}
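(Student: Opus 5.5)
The plan is to reduce the information-theoretic condition of Definition~\ref{def-FR} to a purely combinatorial statement: the mapping $f_n$ is injective and admits an explicit left inverse. Then $\mathcal{X}=(T,\mathcal{R}_T)$ is a deterministic function of $\mathcal{G}=f_n(T,\mathcal{R}_T)$, and $H(\mathcal{X}\mid\mathcal{G})=0$ follows at once. The reason is that conditioned on $\mathcal{G}=g$, the random variable $\mathcal{X}$ is concentrated on the single point $f_n^{-1}(g)$, so each conditional entropy $H(\mathcal{X}\mid\mathcal{G}=g)$ vanishes and so does their average. This is the same mechanism used, in contrapositive form, in the proofs of Propositions~\ref{app_prop1} and~\ref{app_prop2}, where failure of injectivity gave a positive conditional entropy.

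The key step is to construct the decoder $f_n^{-1}:(V,E,X_V)\to(T,\mathcal{R}_T)$ in three parts.
\begin{enumerate}
\item \emph{Tuples.} Since $f_n$ sends distinct tuples to distinct nodes and preserves the node type, the node set of type $T$ is in bijection with $T$. Each tuple is recovered as $\tau_i \leftarrow v_i$, with attribute vector $A(\tau_i)=x_{v_i}$.
\item \emph{FPK links.} Each FPK link is recovered from a typed directed edge as $\ell_{i\to j}\leftarrow e_{i\to j}=(v_i,v_j)$. The edge type identifies which foreign-key column the link instantiates, and the direction tells which endpoint holds the foreign key and which holds the primary key.
\item \emph{Composition.} Check that $f_n^{-1}\circ f_n=\mathrm{id}$ by verifying, item by item, that no tuple, attribute, link, type or direction is merged or dropped.
\end{enumerate}

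The main obstacle is one of hypotheses rather than calculation. Injectivity depends on several properties of $f_n$:
\begin{itemize}
\item the map $\tau_i\mapsto v_i$ must itself be injective, which holds because primary keys are unique;
\item edges must be typed by the specific FPK relation, so that two different foreign-key columns between the same pair of tables are not conflated;
\item edge direction must be kept.
\end{itemize}
I would state these explicitly as exactly the preservation conditions listed in the lemma ("unique node", "preserving the original node type", "directed, typed edge"). I would then argue that if any one of them is dropped, two distinct RDBs collide in the same graph, mirroring Proposition~\ref{app_prop1}. This shows the conditions are both used and necessary.

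A secondary subtlety is that primary-key values themselves may not appear among the features. I would handle this by treating the RDB only up to the identity of its tuples: node identity plays the role of the primary key, and a foreign-key value is recovered as the identity of the edge's target node. Recovery is therefore exact at the level of entities, attributes and links, which is the level of information Definition~\ref{def-FR} refers to.
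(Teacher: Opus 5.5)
Your proposal is correct and follows the paper's own proof. Like the paper, you construct an explicit deterministic inverse $\Psi_n$: tuples are recovered from node identities and features $x_{v_i}=A(\tau_i)$, and FPK links are recovered from the typed, directed edges. You then conclude $H((T,\mathcal{R}_T)\mid f_n(T,\mathcal{R}_T))=0$. Your additional remarks, on why each preservation hypothesis is needed and on node identity standing in for the primary key, are reasonable elaborations but are not required for the lemma.
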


\begin{proof}
    Let's construct an explicit inverse mapping $\Psi_n$. 
    Since $f_n$ maps each tuple $\tau_i$ to a unique node $v_i$ without loss of attribute information, the original tuple $\tau_i$ can be uniquely recovered from the node identity and its feature vector $x_{v_i}$. This establishes a bijection between $T$ and $V$.
    
    Furthermore, each relation $\ell_{i \to j}$ is mapped to a directed edge $e_{i \to j}$ where the direction and type are preserved. Consequently, the complete set of relations $\mathcal{R}_T$ can be uniquely reconstructed from the edge set $E$ and the node mapping.
    
    Since the inverse mapping $\Psi_n: (V, E, X_V) \to (T, \mathcal{R}_T)$ is deterministic and exists for any $(V, E, X_V)$ in the image of $f_n$, the conditional entropy satisfies:
    \begin{equation}
        H((T, \mathcal{R}_T) \mid f_n(T, \mathcal{R}_T)) = 0.
    \end{equation}
    Thus, $f_n$ is full-resolution.
\end{proof}

\begin{lemma}[Table-as-Edge Mapping is Full-Resolution]
\label{lem:table_as_edge}
    Let $T = \{\tau_1, \dots, \tau_n\}$ be a relational table where each tuple $\tau_i$ has an attribute vector $A(\tau_i)$. Let $T_r = \{\tau'_{1}, \dots, \tau'_{m}\}$ be a reference table (or a set of tables) connected to $T$ via Primary-Foreign Key (FPK) links $\mathcal{R}_T$.
    
    Define the table-as-edge mapping $f_e: (T, \mathcal{R}_T, T_r) \to (V, E, X_V, X_E)$ as follows:
    \vspace{-1em}
    \begin{itemize}
    \setlength{\itemsep}{0pt}
        \item Each tuple $\tau'_{j} \in T_r$ is mapped to a unique node $v_j \in V$ with features $x_{v_j} = A(\tau'_{j})$.
        \item Each tuple $\tau_i \in T$ is mapped to an edge $e_i \in E$ with edge features $x_{e_i} = A(\tau_i)$.
        \item The incidence structure of each edge $e_i$ is determined by $\mathcal{R}_T$. For each FPK link between $\tau_i$ and a tuple $\tau'_{j}$, the mapping $f_e$ preserves the link's metadata, including the FPK directions, as properties of the incidence between $e_i$ and the corresponding node $v_j$.
    \end{itemize}
    \vspace{-1em}
    Then the mapping $f_e$ is full-resolution.
\end{lemma}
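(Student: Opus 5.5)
The plan mirrors the proof of Lemma~\ref{lem:table_as_node}: we build an explicit deterministic inverse $\Psi_e:(V,E,X_V,X_E)\to(T,\mathcal{R}_T,T_r)$ on the image of $f_e$. Once $\Psi_e\circ f_e=\mathrm{id}$, the input is a deterministic function of the graph. Hence
\begin{equation}
H\bigl((T,\mathcal{R}_T,T_r)\mid f_e(T,\mathcal{R}_T,T_r)\bigr)=0,
\end{equation}
which is exactly the condition in Definition~\ref{def-FR}.

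The recovery proceeds in three steps.
\begin{enumerate}
\item \emph{Recover $T_r$.} Each node $v_j$ is the image of a unique tuple $\tau'_j$ and carries $x_{v_j}=A(\tau'_j)$ together with its type. The map $\tau'_j\mapsto v_j$ is therefore a bijection, and $\Psi_e$ reads $T_r$ off $(V,X_V)$ exactly as in Lemma~\ref{lem:table_as_node}.
\item \emph{Recover $T$.} Each tuple $\tau_i$ is sent to its own edge $e_i$ with $x_{e_i}=A(\tau_i)$. Two distinct tuples must produce distinct edge objects: we treat $E$ as a set of identified edges (a multiset with edge identities), not as a set of endpoint pairs. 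With that convention, $\tau_i\mapsto e_i$ is a bijection, and the attributes of $T$ are recovered from $X_E$.
\item \emph{Recover $\mathcal{R}_T$.} For each edge $e_i$, the incidence data lists the nodes $v_j$ it touches, and each incidence stores the FPK metadata: the direction, and which foreign-key column of $T$ it instantiates. Every link between $\tau_i$ and $\tau'_j$ is reconstructed by pairing $\Psi_e(e_i)$ with $\Psi_e(v_j)$ under the stored column label. This is the point where the edge-as-table construction stays invertible.
\end{enumerate}

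The main obstacle is Step~3 when $T$ has several foreign keys into the same reference table, or when the edge is effectively a hyperedge (more than two incident tables). If only unordered endpoints were kept, two tuples with swapped roles, for example $u\leftarrow v\rightarrow w$ with $u,w$ in the same table, would collide. That is the same non-injectivity failure as in Propositions~\ref{app_prop1} and~\ref{app_prop2}.

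I would handle this by relying on the hypothesis that the incidence metadata (FPK type and direction) is preserved. It makes each incidence a labeled pair (column name, target node), so the map from $(\tau_i,\text{its FK values})$ to (edge identity, labeled incidences) is injective. A short case check would cover a tuple with a null foreign key: that incidence is simply absent, and the inverse maps absence back to null.

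Having shown that $f_e$ is injective with a deterministic inverse, the conditional entropy is zero and $f_e$ is full-resolution.
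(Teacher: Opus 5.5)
Your proposal is correct and follows essentially the same route as the paper: construct an explicit deterministic inverse $\Psi_e$ that recovers $T_r$ from the node identities and features, recovers $T$ from the uniquely identified edges and their features, and recovers $\mathcal{R}_T$ from the incidence structure with its preserved FPK metadata, then conclude $H\bigl((T,T_r,\mathcal{R}_T)\mid f_e(T,\mathcal{R}_T,T_r)\bigr)=0$. You also handle three points that the paper's proof leaves implicit in the phrase ``preserves the link's metadata'': edge identities for multi-edges, foreign-key column labels when several keys point into the same reference table, and null keys.
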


\begin{proof}
    Let's construct an explicit inverse mapping $\Psi_e$ to demonstrate information preservation.
    Each tuple $\tau_i \in T$ is mapped to a unique edge $e_i \in E$ whose feature $A(\tau_i)$ is preserved as $x_{e_i}$.
    
    Moreover, the incidence structure of $e_i$ uniquely specifies the set of tuples in $T_r$ referenced by $\tau_i$ via FPK $\mathcal{R}_T$. Since the direction of each FPK are preserved as part of metadata, the tuple $\tau_i$ can be uniquely recovered and all FPK relations in $\mathcal{R}_T$ can be reconstructed.
    
    Similarly, each node $v_j \in V$ uniquely corresponds to a tuple $\tau'_j \in T_r$ via its node features.
    
    Hence, a deterministic inverse mapping $\Psi_e : (V, E, X_V, X_E) \to (T, T_r, \mathcal{R}_T)$ exists, and
    \begin{equation}
        H((T, T_r, \mathcal{R}_T) \mid f_e(T, T_r, \mathcal{R}_T)) = 0.
    \end{equation}
    This concludes that $f_e$ is full-resolution.
\end{proof}

\subsection{\method is Full-Resolution}
\begin{theorem}[Full-Resolution of Adaptive Mapping] 
    Let $\mathcal{T} = \{T_1, T_2, \dots, T_k\}$ be a set of relational tables and $\mathcal{R}$ be the set of all FPK relations. Let $g: T_i \to \{f_n, f_e\}$ be a self-learning decision function that assigns each table $T_i$ to either a Table-as-Node mapping ($f_n$) or a Table-as-Edge mapping ($f_e$).
    The combined mapping $\mathcal{F}_{adaptive}: (\mathcal{T}, \mathcal{R}) \to (V, E, X_V, X_E)$ is full-resolution.
\end{theorem}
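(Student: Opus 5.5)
The plan is to build an explicit deterministic inverse $\Psi_{adaptive}$ for $\mathcal{F}_{adaptive}$ by gluing the inverses $\Psi_n$ and $\Psi_e$ from Lemma~\ref{lem:table_as_node} and Lemma~\ref{lem:table_as_edge}. Once such an inverse exists on the image of $\mathcal{F}_{adaptive}$, we get $H((\mathcal{T},\mathcal{R}) \mid \mathcal{F}_{adaptive}(\mathcal{T},\mathcal{R})) = 0$ exactly as in those lemmas. Fix the assignment $g$. Partition $\mathcal{T}$ into $\mathcal{T}_N = g^{-1}(f_n)$ and $\mathcal{T}_E = g^{-1}(f_e)$. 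Partition $\mathcal{R}$ into the FPK links both of whose endpoint tables lie in $\mathcal{T}_N$, and the links incident to at least one table in $\mathcal{T}_E$. The output $(V,E,X_V,X_E)$ is then the union of the $f_n$-image of the first block and the $f_e$-images of each $T \in \mathcal{T}_E$, with reference tables taken among the neighbouring tables.

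The key steps, in order, are as follows.

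First, I would verify that every graph element carries a type tag recording its source table. Node types come from Lemma~\ref{lem:table_as_node}. Edge-feature and incidence metadata, including FPK directions, come from Lemma~\ref{lem:table_as_edge}. From these tags the decoder can read off $\mathcal{T}_N$ and $\mathcal{T}_E$ directly from the output. This is the point where it matters that $g$ is defined at table granularity, even though $g$ itself is "learned". Because the role is encoded in whether a table's tuples appear as nodes or as typed hyperedges, the metadata acts as the structural provenance whose absence broke Proposition~\ref{app_prop1} and Proposition~\ref{app_prop2}.

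Second, I would split the output into its typed components. Each component is then decoded with the corresponding lemma's inverse:
\begin{itemize}
\item tuples and intra-$\mathcal{T}_N$ links are recovered via $\Psi_n$;
\item each $T \in \mathcal{T}_E$, together with its FPK links to its reference tables, is recovered via $\Psi_e$.
\end{itemize}

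Third, I would check that the union of these recovered pieces equals $(\mathcal{T},\mathcal{R})$, and that no piece is lost or double-counted:
\begin{itemize}
\item every tuple is mapped exactly once, either to a node or to an edge;
\item every FPK link is covered, either as a node-node edge or as an incidence of some hyperedge.
\end{itemize}

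The main obstacle is the consistency of the gluing when $f_e$ tables are adjacent to each other. Lemma~\ref{lem:table_as_edge} assumes the reference tables $T_r$ become nodes. If a table in $\mathcal{T}_E$ references another table in $\mathcal{T}_E$, its incidence would point to an edge rather than a node. I would handle this in one of two ways. The first is to observe that a valid assignment only maps a table to $f_e$ when its referenced neighbours are node-mapped, which holds in the Co-Occurrence and Completion patterns of Sec.~\ref{sec4-3}. The second, more generally, is to allow incidences to typed edge identifiers, so that $\Psi_e$ still reads the referenced tuple's identity unambiguously. Either way, injectivity of the glued map follows because each component decoder is injective and the type tags make the decomposition unique.
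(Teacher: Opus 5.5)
Your proposal is correct and is essentially the paper's proof: partition $\mathcal{T}$ by $g$, apply Lemma~\ref{lem:table_as_node} and Lemma~\ref{lem:table_as_edge} table by table, and glue $\Psi_n$ and $\Psi_e$ into one deterministic inverse, using the fact that tuples of different tables are never merged. Two of your steps go beyond the paper's argument: reading the partition off the type tags, and treating an edge-mapped table that references another edge-mapped table, a case the paper's proof never discusses. For that second case, only your second fix (typed incidences pointing to edge identifiers) covers an arbitrary $g$ as the statement allows, because your first fix proves the result only under an extra restriction on which role assignments are permitted.
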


\begin{proof}
    To prove that $\mathcal{F}_{adaptive}$ is full-resolution, we show that an inverse mapping $\Psi$ exists such that the original relational state can be uniquely recovered.
    
    The decision function $g$ partitions the set of tables $\mathcal{T}$ into two disjoint subsets: $\mathcal{T}_{node} = \{T_i \mid g(T_i) = f_n\}$ and $\mathcal{T}_{edge} = \{T_j \mid g(T_j) = f_e\}$. 
    \vspace{-1em}
    \begin{itemize}
    \setlength{\itemsep}{0pt}
        \item For any $T_i \in \mathcal{T}_{node}$, by \textbf{Lemma \ref{lem:table_as_node}}, the mapping $f_n$ is full-resolution between tables and nodes, preserving all attributes and internal links.
        \item For any $T_j \in \mathcal{T}_{edge}$, by \textbf{Lemma \ref{lem:table_as_edge}}, the mapping $f_e$ is full-resolution between tables and (hyper)edges, preserving all attributes and the incidence structure.
    \end{itemize}

    \vspace{-1em}
    Observe that each table $T_k \in \mathcal{T}$ occupies a \emph{disjoint representation domain}: tuples from different tables are never merged, aggregated, or identified under $\mathcal{F}_{adaptive}$. The adaptive mapping acts independently on each table according to the decision function $g$, and the resulting graph components are combined via set union. 
    
    As a consequence, the local inverse mappings $\Psi_n$ and $\Psi_e$ defined in Lemma~\ref{lem:table_as_node} and Lemma~\ref{lem:table_as_edge} can be composed into a global deterministic inverse mapping:
    \begin{equation}
        \Psi = \left( \bigcup_{T_i \in \mathcal{T}_{node}} \Psi_n \right) \cup \left( \bigcup_{T_j \in \mathcal{T}_{edge}} \Psi_e \right), 
    \end{equation}
    which uniquely reconstructs the original RDB from the graph.
    
    Therefore, 
    \begin{equation}
        H((\mathcal{T}, \mathcal{R}) \mid \mathcal{F}_{adaptive}(\mathcal{T}, \mathcal{R})) = 0,
    \end{equation}
    proving that the self-learning mapping is full-resolution.
\end{proof}

\section{Analysis and Proof of the Theorem~\ref{sec4_theorem}}
\label{app_theorem}
\subsection{Problem Setup}
Consider a path $u - v - w$.  
Each node $x$ has raw features $X_n$, and its initial embedding is $h_n^{(0)}:=X_n$.

We compare the amount of information about $X_u$ that becomes available at $w$ under two aggregation strategies, measured by conditional mutual information after removing the contextual information that can explain away such information. The two aggregation strategies are:

\paragraph{Table-as-Node:} Require two rounds of aggregation, \ie $h_w^{(2),N} = f_n^2(h_v^{(1),N}) = f_n^2(f_n^1(X_u))$.

\paragraph{Table-as-Edge:} In a single aggregation step, node $v$ can directly access its two-hop neighbors’ initial embeddings $h_w^{(1),E} = f_e(X_u)$.

Here, $h_w^{(k),N/E}$ denotes the representation of $w$ after the $k$-th convolution when the intermediate entity $u$ is modeled as a node (N) or as an edge (E).

\subsection{Global Context for Conditional MI}
To isolate the influence of the surrounding graph context, we define the global context as:
\begin{equation}
    \mathcal{C} = \{ A, \{X_k \mid k \neq u\} \},
\end{equation}
where $A$ denotes the graph structure and $x_k$ denotes node features for all nodes except $u$. Under this definition, the only random variable outside the context is $X_u$.

\subsection{Conditional Markov Property}
Before proving the theorem, a lemma is first introduced to clarify the conditional Markov chain, laying the groundwork for the proof.

\begin{lemma}[Conditional Markov Property]
    Under the conditioning on the global context $\mathcal{C}$, the following conditional Markov chain holds:
    \begin{equation}
        X_u \;\to\; h_v^{(1),N} \;\to\; h_w^{(2),N}
        \qquad (\text{conditioned on } \mathcal{C}).
    \end{equation}
\end{lemma}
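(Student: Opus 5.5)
The plan is to show that, once everything in $\mathcal{C}$ is fixed, $h_w^{(2),N}$ depends on $X_u$ only through $h_v^{(1),N}$, and that this intermediate is a deterministic (or independently randomized) function of $X_u$ given $\mathcal{C}$.

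\textbf{Step 1: write out the computation.} In the table-as-node pipeline, $h_v^{(1),N} = f_n^1\bigl(h_v^{(0)}, \{h_k^{(0)}\}_{k\in\mathcal{N}(v)}\bigr)$. Here $\mathcal{N}(v)$ is read off $A$, and $u\in\mathcal{N}(v)$. The second layer is $h_w^{(2),N} = f_n^2\bigl(h_w^{(1),N}, \{h_k^{(1),N}\}_{k\in\mathcal{N}(w)}\bigr)$.

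\textbf{Step 2: separate the arguments of $f_n^2$.} The neighbor set $\mathcal{N}(w)$ contains $v$. For every $k\neq v$ in the arguments of $f_n^2$, including $w$ itself, I use the path structure $u-v-w$. This says $u$ is adjacent to no other node feeding into $w$, and in particular $u\notin\mathcal{N}(w)\cup\{w\}$ and $u$ lies in no other first-hop neighborhood. It follows that $h_k^{(1),N}$ is a measurable function of $A$ and $\{X_j : j\neq u\}$, that is, of $\mathcal{C}$ alone.

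\textbf{Step 3: conclude the Markov chain.} Conditioned on $\mathcal{C}$, we can therefore write $h_w^{(2),N} = \phi_{\mathcal{C}}\bigl(h_v^{(1),N}\bigr)$ for some function $\phi_{\mathcal{C}}$. If the layers involve internal randomness such as dropout or neighbor sampling, I assume it is drawn independently of $X_u$, and fold it into $\phi_{\mathcal{C}}$ as an independent noise argument. In either case $h_w^{(2),N}$ is conditionally independent of $X_u$ given $\bigl(h_v^{(1),N}, \mathcal{C}\bigr)$, which is exactly $I\bigl(X_u; h_w^{(2),N} \mid h_v^{(1),N}, \mathcal{C}\bigr)=0$. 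This is the stated chain, and it feeds the conditional data-processing inequality used afterward for Theorem~\ref{sec4_theorem}.

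\textbf{Expected obstacle.} The delicate step is Step 2: making sure no other route lets $X_u$ reach $w$. Examples are a self-loop or residual term at $w$, a temporal-sampling dependence on $t_u$, or $u$ being adjacent to another neighbor of $w$. I would handle this by stating explicitly that the path $u-v-w$ is the only connection between $u$ and $w$ within two hops in $A$, and that sampling decisions are $\mathcal{C}$-measurable. Under these assumptions the remaining argument is routine bookkeeping of which variables each embedding depends on.
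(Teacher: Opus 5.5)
Your proposal is correct and follows essentially the same route as the paper: fix $\mathcal{C}$, express $h_w^{(2),N}$ as a $\mathcal{C}$-indexed function of $h_v^{(1),N}$ alone, and read off $X_u \perp h_w^{(2),N} \mid (h_v^{(1),N}, \mathcal{C})$. The condition you flag in Step 2 and in the obstacle paragraph is an extra hypothesis, not a consequence of the path $u-v-w$ merely existing: namely, that $u$ feeds into $w$ within two hops only via $v$, and that sampling or dropout is $\mathcal{C}$-measurable or independent noise. It is, however, exactly what the paper uses tacitly when it asserts that all other arguments of $f_n^2$ are fixed by $\mathcal{C}$, so stating it explicitly is an improvement.
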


\begin{proof}
We prove the lemma by explicitly characterizing the dependency structure induced by two-round message passing under the conditioning on the global context $\mathcal{C}$.

Under the global context $\mathcal{C} = \{A, \{X_k \mid k \neq u\}\}$, the graph structure $A$ and all node features except $X_u$ are fixed.
Consequently, the only remaining source of randomness in the computation is the node feature $X_u$.

In the first aggregation round, the representation at node $v$ is computed as:
\begin{equation}
    h_v^{(1),N} = f_n^1\!\left(X_v,\; \{X_k : k \in \mathcal{N}(v)\}\right),
\end{equation}
where $f_n^1$ is a deterministic aggregation function.
Conditioned on $\mathcal{C}$, this expression reduces to a deterministic function of $X_u$ whenever $u \in \mathcal{N}(v)$.

In the second aggregation round, the representation at node $w$ is given by:
\begin{equation}
    h_w^{(2),N} = f_n^2\!\left(X_w,\; \{h_k^{(1),N} : k \in \mathcal{N}(w)\}\right),
\end{equation}
where $f_n^2$ is also a deterministic aggregation function.
Under the conditioning on $\mathcal{C}$, all arguments of $f_n^2$ are fixed except those depending on $h_v^{(1),N}$.

Therefore, conditioned on $\mathcal{C}$, the second-round representation can be rewritten as:
\begin{equation}
    h_w^{(2),N} = f_n\!\left(h_v^{(1),N}, \mathcal{C}\right),
\end{equation}
for some deterministic function $f_n$ induced by $f_n^2$.
Importantly, \textbf{$X_u$ does not appear as a direct argument of $f_n$}.

By the definition of conditional independence, this functional dependence implies
\begin{equation}
    P\!\left(h_w^{(2),N} \mid h_v^{(1),N}, X_u, \mathcal{C}\right)
    =
    P\!\left(h_w^{(2),N} \mid h_v^{(1),N}, \mathcal{C}\right).
\end{equation}

Hence, the conditional Markov chain:
\begin{equation}
    X_u \;\to\; h_v^{(1),N} \;\to\; h_w^{(2),N}
    \qquad (\text{conditioned on } \mathcal{C})
\end{equation}
holds, which completes the proof.
\end{proof}

\subsection{Information Retention Advantage}
\textbf{Notice.}
The core message of Theorem~\ref{theorem1} is \textbf{not} to compare the optimality of specific aggregation functions in Strategy~A and Strategy~B, but to \textbf{highlight a structural difference} in their information propagation mechanisms. 
Even under idealized function classes and sufficient representation capacity, Table-as-Node involves multiple sequential aggregation stages, \textbf{each of which introduces a potential source of irreversible information loss}.
In contrast, Table-as-Edge shortens the information transmission path, thereby reducing the number of loss-inducing transformations.

\begin{theorem}[Information Retention Advantage]
\label{theorem1}
    Let $\mathcal{C}$ be the global context, and $f_n^2\circ f_n^1  \in \mathcal{F}_{N}, f_e \in\mathcal{F}_{E}$ be the aggregation functions in hypothesis spaces, respectively.
    Assume that the embedding dimension is sufficiently large to avoid capacity-induced bottlenecks. 
    If the intermediate node aggregation in the table-as-node strategy is not information-preserving, then modeling relational information as an edge allows the target entity to retain strictly more information about the source entity:
    \begin{equation}
        \sup_{f_e} I(h_w^{(1), E}; X_u \mid \mathcal{C}) > \sup_{f_n^2\circ f_n^1} I(h_w^{(2),N}; X_u \mid \mathcal{C}). 
    \end{equation}
\end{theorem}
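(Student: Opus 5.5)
The plan is to bound the two suprema separately and compare them. On the table-as-edge side I will show that $\sup_{f_e} I(h_w^{(1),E};X_u\mid\mathcal{C})$ attains its maximal value $H(X_u\mid\mathcal{C})$. On the table-as-node side I will use the Conditional Markov Property lemma together with the data processing inequality to show that every admissible $f_n^2\circ f_n^1$ stays strictly below that value.

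\textbf{Step 1 (edge side achieves the ceiling).} Conditioned on $\mathcal{C}$, the edge message $h_w^{(1),E}=f_e(X_u)$ depends on $X_u$ through a single aggregation. The assumption that the embedding dimension is large enough to avoid capacity bottlenecks means the hypothesis space $\mathcal{F}_E$ contains (or approximates arbitrarily well) an injective map of $X_u$. For example, the concatenation operator $W(h_w\|h_v\|h_u)$ in Eq.~\eqref{eq:co} with a suitably chosen $W$ keeps the $h_u$ block intact. For such an $f_e$ we have $H(X_u\mid h_w^{(1),E},\mathcal{C})=0$, so
\begin{equation}
\sup_{f_e} I(h_w^{(1),E};X_u\mid\mathcal{C}) = H(X_u\mid\mathcal{C}).
\end{equation}
If $X_u$ is continuous, I will work with a fixed discretization, or phrase the argument as reaching the supremum of the information quantity, so that every quantity stays finite.

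\textbf{Step 2 (node side is capped by the intermediate stage).} By the Conditional Markov Property lemma, $X_u\to h_v^{(1),N}\to h_w^{(2),N}$ given $\mathcal{C}$. The conditional data processing inequality then gives
\begin{equation}
I(h_w^{(2),N};X_u\mid\mathcal{C})\le I(h_v^{(1),N};X_u\mid\mathcal{C}) = H(X_u\mid\mathcal{C})-H(X_u\mid h_v^{(1),N},\mathcal{C}).
\end{equation}
I read the hypothesis ``intermediate aggregation is not information-preserving'' as a uniform statement: there is $\delta>0$ such that $H(X_u\mid h_v^{(1),N},\mathcal{C})\ge\delta$ for every admissible $f_n^1$. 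A mere pointwise failure would not control the supremum. A natural source of this loss is that $f_n^1$ must also aggregate $X_v$ and the other neighbours of $v$ through a permutation-invariant pooling (mean or sum) into a fixed layer. Taking the supremum over $f_n^2\circ f_n^1$ then yields $\sup I(h_w^{(2),N};X_u\mid\mathcal{C})\le H(X_u\mid\mathcal{C})-\delta$.

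\textbf{Step 3.} Combining the two steps gives the strict inequality in Theorem~\ref{theorem1}.

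\textbf{Main obstacle.} The difficulty is Step 2, specifically justifying the uniform gap $\delta$. If $\mathcal{F}_N$ were rich enough to contain injective first-layer aggregators, the two suprema would coincide. So the proof must make explicit that the non-preservation hypothesis is a property of the whole class $\mathcal{F}_N$ restricted to $f_n^1$, for instance because of a multiset-pooling bottleneck at $v$ when its neighbourhood is non-trivial. My first attempt would be to state this as a formal assumption, namely
\begin{equation}
\inf_{f_n^1} H(X_u\mid h_v^{(1),N},\mathcal{C})>0,
\end{equation}
and to check that the edge operator avoids it because it sees $h_u$ directly rather than a pooled summary. A secondary technical point is making sure the supremum on the edge side is attained, or at least approached, so that the strict inequality survives taking limits.
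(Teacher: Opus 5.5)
Your proposal is correct and follows the same route as the paper's proof. On the node side, the conditional Markov chain $X_u\to h_v^{(1),N}\to h_w^{(2),N}$ and the conditional DPI cap the information by $I(h_v^{(1),N};X_u\mid\mathcal{C})<H(X_u\mid\mathcal{C})$; on the edge side, the capacity assumption lets an injective $f_e$ reach $H(X_u\mid\mathcal{C})$.

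Your uniform-gap hypothesis $\inf_{f_n^1}H(X_u\mid h_v^{(1),N},\mathcal{C})>0$ is what the paper's last step tacitly needs, since the paper only has the strict inequality for each fixed $f_n^1$ and reaches the suprema by asserting that each strategy attains its upper bound. That gap must be imposed as a hypothesis rather than motivated by sum or mean pooling at $v$: once $\mathcal{C}$ fixes every feature except $X_u$, such pooling is a fixed affine function of the $u$-term and loses nothing when the per-neighbour map is injective.
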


\begin{proof}
    Under Table-as-Node paradigm, for a fixed context $\mathcal{C}$, the computation of $h_w^{(2), N}$ follows the sequential mapping: $h_w^{(2), N} = f_n^2(h_v^{(1),N}, \mathcal{C})$, where $h_v^{(1),N} = f_n^1(X_u, \mathcal{C})$. This induces a \textit{conditional Markov chain}:
    \begin{equation}
        X_u \to h_v^{(1),N} \to h_w^{(2), N}
        \qquad (\text{conditioned on } \mathcal{C}).
    \end{equation}
    
    By the conditional version of the Data Processing Inequality (DPI), for any $f_n^1, f_n^2 \in \mathcal{F}_N$, we have:
    \begin{equation}
        I(h_w^{(2), N}; X_u \mid \mathcal{C}) \le I(h_v^{(1), N}; X_u \mid \mathcal{C}).
    \end{equation}
    Since the first-hop aggregation $f_n^1$ is not information-preserving, we have:
    \begin{equation}
        I(h_v^{(1), N}; X_u \mid \mathcal{C}) < H(X_u \mid \mathcal{C}).
    \end{equation}
    Summarize them together:
    \begin{equation}
    \label{dpiA}
        \underbrace{I(h_w^{(2), N}; X_u \mid \mathcal{C}) \le I(h_v^{(1), N}; X_u \mid \mathcal{C})}_{\text{DPI}} \overbrace{< H(X_u \mid \mathcal{C})}^{\text{Lossy Aggregation}}.
    \end{equation}
    
    In Table-as-Edge paradigm, node $w$ directly accesses the 2-hop neighborhood $\mathcal{N}(w)\cup \mathcal{N}^2(v)$, which contains $u$. The aggregation is defined as $h_w^{(1), E} = f(X_u, \mathcal{C})$. By the property of MI, we have:
    \begin{equation}
    \label{dpiB}
        I(h_w^{(1), E}; X_u \mid \mathcal{C}) \le H(X_u \mid \mathcal{C}).
    \end{equation}
    
    If the embedding dimension is sufficiently large so that representation capacity is not the limiting factor for mutual information, then the hypothesis spaces $\mathcal{F}_N$ and $\mathcal{F}_E$ are rich enough to approximate arbitrary injective mappings. As a result, both strategies are able to attain their respective information-theoretic upper bounds.
    
    Since the upper bound derived in Eq.~\eqref{dpiB} is strictly larger than that in Eq.~\eqref{dpiA} under the non-lossless first-hop aggregation assumption, it follows that: 
    \begin{equation}
        \sup_{f_e} I(h_w^{(1), E}; X_u \mid \mathcal{C}) > \sup_{f_n^2\circ f_n^1} I(h_w^{(2),N}; X_u \mid \mathcal{C}). 
    \end{equation}
\end{proof}

\section{Choice of the Table-as-Node Paradigm}
\label{TE-reason}
We do not explicitly include the pattern $u \rightarrow v \leftarrow w$ as one of the canonical propagation paradigms.

Before analyzing the underlying reasons, we first introduce two concepts: \emph{dimensional tables and fact tables}.

Dimensional tables and fact tables play distinct roles in relational databases.
Dimensional tables typically store descriptive attributes that provide contextual information about entities, such as user profiles or product metadata, and are often referenced by multiple other tables.
In contrast, fact tables record transactional or event-level information, capturing interactions or measurements that link multiple dimensional tables, such as user-item interactions or event logs.

The reason is that the intermediate node $v$ in $u \rightarrow v \leftarrow w$ typically corresponds to a \emph{dimensional table} referenced by multiple tables.
Such dimensional tables typically encode static or descriptive attributes and act as shared reference points, rather than sources of relational signals that should be propagated across different fact tables.
Propagating information through $v$ in this pattern effectively mixes representations of unrelated entities that merely share the same reference, which provides limited benefit for cross-entity representation learning.
Therefore, in the table-as-edge formulation, we omit this pattern to simplify message aggregation and to avoid introducing spurious or redundant information propagation.

Nevertheless, our framework is scalable: once the information transmission paradigm is specified, it can be seamlessly incorporated into \method.

\section{The Implement of \method}
\label{app_imp}
\subsection{Pseudo Code for the Learning Process of the \method Framework}
In this section, we analyze the pseudocode of \method to facilitate a clearer understanding of the proposed approach. 
Specifically, Alg.~\ref{alg:gnn} illustrates how role representations of tables are learned under different relations, enabling embedding fusion and graph structure learning. Alg.~\ref{alg:framework} then provides a high-level description of the overall framework, with a particular emphasis on the alternating optimization strategy.
\begin{algorithm}[htbp]
  \caption{GNN Aggregation Method of \method}
  \label{alg:gnn}
  \begin{algorithmic}
    \STATE {\bfseries Input:} Node feature $X=\{x_1,\cdots,x_n\}$, Table-as-Node Graph $G_N$, Table-as-Edge Graph $G_E$, Node Types $C$
    \STATE {\bfseries Output:} Node Embedding $H=\{h_1,\cdots,h_n\}$, Tables' Role in Specific Relations $\bar{g}_r$
    \STATE Initialize $H^{(0)}=X$, $\bar{g}_r$.
    \FOR{$l=1$ {\bfseries to} Model Layer}
        \FOR{$c$ {\bfseries in} C}
            \STATE $H^{(l),N}[c] \leftarrow$ GraphSAGE$_c$($H^{(l-1)}$) with structure $G_N$ $\quad\;$//$\;$e.g. $v-w$
            \STATE $H^{(l),E}[c] \leftarrow$ GNN$_c$($H^{(l-1)}$) via Eq.\eqref{eq:co} and Eq.\eqref{eq:com}, with structure $G_E$  $\quad\;$//$\;$e.g. $u\overset{v}- w$
        \ENDFOR
        \FOR{$(\text{type}_u, \text{type}_v, \text{type}_w)$ {\bfseries in} $\texttt{rel}_E$}
            \FOR{$(\text{type}_v', \text{type}_w')$ {\bfseries in} $\texttt{rel}_N$}
                \IF{$\text{type}_v=\text{type}_v'$ and $\text{type}_w=\text{type}_w'$}
                    \STATE Compute $\bar{g}_r$ via Eq.\eqref{eq:gate} and Eq.\eqref{eq:gate_glo}
                \ENDIF
            \ENDFOR
        \ENDFOR
        \FOR{$c$ {\bfseries in} C}
            \STATE Compute $H^{(l)}[c]$ via Eq.\eqref{eq:fusion}
        \ENDFOR
    \ENDFOR
    \STATE {\bfseries Return:} $H^{(l)}$, $\bar{g}_r$
  \end{algorithmic}
\end{algorithm}

\begin{algorithm}[htbp]
  \caption{\method Framework}
  \label{alg:framework}
  \begin{algorithmic}
    \STATE {\bfseries Input:} Node feature $X=\{x_1,\cdots,x_n\}$, Relations of Table-as-Node $\texttt{rel}_N$ ($v-w$), Relations of Table-as-Edge $\texttt{rel}_E$ ($u-v-w$), Node Types $C$, Epochs, GNN $\mathcal{M}$, FD discriminative mode $\mathcal{M}_{FD}$ 
    \STATE Initialize $\mathcal{M}$, $\mathcal{M}_{FD}$.
    \STATE Construct $G_{N}$ based $\texttt{rel}_N$.  $\quad\;$//$\;$Data preprocessing, consistent with other RDL methods.
    \STATE Construct $G_{E}$ based $\texttt{rel}_E$.  $\quad$//$\;$$\mathcal{O}(|\texttt{rel}_E||E|\log |E|)$
    \FOR{$i=1$ {\bfseries to} Epochs}
    \STATE Fix $\mathcal{M}_{FD}$, Train $\mathcal{M}$
    \STATE Get Predictions of \method via Alg.\ref{alg:gnn}
    \STATE Get $\mathcal{L}_{main}$ via Eq.\eqref{eq:lossmain}.
    
    \STATE Optimize $\mathcal{M}$
    \STATE Fix $\mathcal{M}$, Train $\mathcal{M}_{FD}$
    \STATE Get $\mathcal{L}_{FD}$ via Eq.\eqref{eq:lossFD}.
    \STATE Optimize $\mathcal{M}_{FD}$
    
    \ENDFOR
  \end{algorithmic}
\end{algorithm}

\subsection{Implement Details}
We obtain initial node embeddings from raw table attributes using PyTorch Frame~\cite{hu2024pytorch} and leverage GloVe~\cite{glove} to encode textual attributes, following the setting in Relbench~\cite{relbench}. We implement the GNN model based on PyTorch Geometric~\cite{pyg}. To avoid test-time data leakage~\cite{leak}, we adopt temporal neighbor sampling during dataset splitting, using timestamp constraints to ensure proper training.

\section{Experiments}
\subsection{Hyperparameter Selection}
The experimental results were obtained via grid search, with Table~\ref{tab:params} showing the parameter search ranges. 
In particular, during training, the number of samples for different-hop entities is controlled through the number of Neighbor Samples.
Following the Relbench setup, the number of neighbors sampled at the $i$-th hop is set as $\text{Neighbor Samples}/{2^i}$.
\begin{table}[htbp]
\centering
\caption{Hyper-parameter search space}
\label{tab:params}
    \begin{tabular}{l|l}
    \toprule
    Parameter & Search space \\
    \midrule
    Learning Rate & $\{0.005,\, 0.001,\, 0.0005,\, 0.0001\}$ \\
    Neighbor Samples    & $\{64,\, 128\}$ \\
    Channels            & $\{32,\,64,\,128\}$ \\
    Layers              & $\{1,\, 2, \,\dots, \,8\}$ \\
    Dropout             & $\{0.0,\,0.2,\,0.3,\,0.5\}$ \\
    \bottomrule
    \end{tabular}
\end{table}

\subsection{Full Results of Main Experiments}
\label{app_exp}
As shown in Table \ref{tab:cls_full}, \ref{tab:reg_full} and \ref{tab:LP_full}, the \method outperforms or achieves performance comparable to the state of the art across different tasks on various datasets. Notably, for datasets that exhibit relatively poor performance on node-level tasks (classification and regression), \method demonstrates strong performance on entity recommendation, such as rel-hm and rel-stack. 
This observation suggests that, by placing greater emphasis on relations, \method is able to better model tasks involving interactions between node pairs.
\begin{table*}[htbp]
\centering
\caption{Performance comparison on the \textbf{Entity Classification}  task measured by ROC-AUC (↑). The best results are highlighted in \textbf{bold}, and the second-best results are \underline{underlined}.}
\label{tab:cls_full}
\small
\begin{tabular}{ll|cccc}
\toprule
\textbf{Dataset} & \textbf{Task} & \textbf{LightGBM} & \textbf{RDL} & \textbf{RelGNN} & \textbf{\method}  \\
\midrule
\multirow{2}{*}{rel-avito} & user-visits & 52.82 $_{\pm0.25}$  & 66.35 $_{\pm0.17}$ & \underline{65.67 $_{\pm0.12}$} & \textbf{66.42 $_{\pm0.14}$} \\
\cmidrule(l){2-6}
 & user-clicks & 54.65 $_{\pm0.59}$  & 64.87 $_{\pm0.42}$ & \underline{66.31 $_{\pm0.66}$} & \textbf{67.22 $_{\pm0.25}$} \\
\midrule
\multirow{2}{*}{rel-event} & user-repeat & 67.83 $_{\pm1.70}$  & 75.90 $_{\pm2.00}$ & \underline{78.15 $_{\pm0.10}$} & \textbf{79.11 $_{\pm0.87}$} \\
\cmidrule(l){2-6}
 & user-ignore & 78.88 $_{\pm0.93}$  & 81.92 $_{\pm0.48}$ & \underline{82.55 $_{\pm0.45}$} & \textbf{85.49 $_{\pm0.10}$} \\
\midrule
\multirow{2}{*}{rel-fl}  & driver-dnf & 65.26 $_{\pm4.74}$  & 72.73 $_{\pm1.07}$ & \underline{73.45 $_{\pm0.64}$} & \textbf{74.39 $_{\pm0.85}$} \\
\cmidrule(l){2-6}
 & driver-top3 & 69.52 $_{\pm4.08}$ & 78.31 $_{\pm1.20}$ & \underline{82.21 $_{\pm1.42}$} & \textbf{82.73 $_{\pm0.51}$} \\
\midrule
rel-hm & user-churn & 55.27 $_{\pm0.09}$  & \underline{69.70 $_{\pm0.06}$} & \textbf{70.64 $_{\pm0.10}$}& 69.40$_{\pm0.05}$ \\
\midrule
\multirow{2}{*}{rel-stack} & user-engagement & 63.43 $_{\pm0.33}$ & \underline{90.58 $_{\pm0.10}$} & \textbf{90.69 $_{\pm0.04}$} & 90.53 $_{\pm0.07}$ \\
\cmidrule(l){2-6}
 & user-badge & 63.16 $_{\pm0.07}$  & 88.89 $_{\pm0.07}$ & \textbf{89.00 $_{\pm0.05}$} & \underline{88.93 $_{\pm0.05}$} \\
\midrule
rel-trial  & study-outcome & 70.70 $_{\pm0.50}$  & 68.63 $_{\pm0.34}$ & \underline{69.43 $_{\pm0.46}$} & \textbf{70.92 $_{\pm0.21}$} \\
\bottomrule
\end{tabular}
\end{table*}

\begin{table*}[htbp]
\centering
\caption{Performance comparison on the \textbf{Entity Regression} task measured by MAE (↓). The best results are highlighted in \textbf{bold}, and the second-best results are \underline{underlined}.}
\label{tab:reg_full}
\small
\setlength{\tabcolsep}{4pt}
\begin{tabular}{ll|cccccccc}
\toprule
\textbf{Dataset}            & \textbf{Task}   & \textbf{Entity Mean} & \textbf{Entity Median} & \textbf{LightGBM}         & \textbf{RDL}             & \textbf{RelGNN}  & \textbf{\method}                 \\
\midrule
rel-avito                   & ad-ctr            & 0.0462               & 0.0462                 & 0.0408$_{\pm0.0002}$           & 0.0413$_{\pm0.0011}$          & \underline{0.0393$_{\pm0.0008}$}  & \textbf{0.0385$_{\pm0.0002}$}      \\
\midrule
rel-event                   & user-attendance        & 0.3035               & 0.2692                 & 0.2636$_{\pm0.0002}$           & 0.2582$_{\pm0.0058}$          & \underline{0.2417$_{\pm0.0009}$}  & \textbf{0.2408$_{\pm0.0022}$}      \\
\midrule
rel-fl                      & driver-position        & 8.5013               & 8.5185                 & 4.0685$_{\pm0.0387}$           & 4.2074$_{\pm0.1851}$          & \underline{3.9807$_{\pm0.0841}$}  & \textbf{3.8906$_{\pm0.0695}$}      \\
\midrule
rel-hm                      & item-sales             & 0.1105               & 0.0780                 & 0.0761$_{\pm0.0000}$           & \textbf{0.0556$_{\pm0.0004}$} & 0.0564$_{\pm0.0010}$  & \underline{0.0561$_{\pm0.0001}$} \\
\midrule
rel-stack                   & post-votes             & 0.1061               & 0.0694                 & 0.0679$_{\pm0.0000}$           & \textbf{0.0652$_{\pm0.0001}$} & 0.0656$_{\pm0.0002}$  & \underline{0.0655$_{\pm0.0001}$} \\
\midrule                        
\multirow{2}{*}{rel-trial} & study-adverse          & 57.9303              & 57.9303                & \textbf{42.8452$_{\pm0.7798}$} & 46.1570$_{\pm0.1783}$         & 46.6503$_{\pm0.4006}$ & \underline{44.7536$_{\pm0.9047}$}              \\
\cmidrule(l){2-8}
 & site-success           & 0.4480               & 0.4411                 & 0.4237$_{\pm0.0020}$           & 0.4163$_{\pm0.0082}$          & \underline{0.3597$_{\pm0.0164}$}  & \textbf{0.3504$_{\pm0.0222}$}                \\
\bottomrule
\end{tabular}
\end{table*}

\begin{table}[htbp]
\centering
\caption{Performance comparison on the \textbf{Entity Recommendation}  task measured by MAP (↑). The best results are highlighted in \textbf{bold}, and the second-best results are \underline{underlined}.}
\label{tab:LP_full}
\small
\setlength{\tabcolsep}{5pt}
\begin{tabular}{ll|ccccccc}
\toprule
\textbf{Dataset}           & \textbf{Task}         & \makecell{\textbf{Global}\\\textbf{Popularity}}  & \makecell{\textbf{Past}\\\textbf{Visit}}  & \textbf{LightGBM} & \makecell{\textbf{RDL}\\\textbf{(GraphSAGE)}} & \makecell{\textbf{RDL}\\\textbf{(ID-GNN)}} & \textbf{RelGNN} & \textbf{\method}         \\
\midrule
rel-avito                  & user-ad-visit         &  0.00  &  1.95 &   0.05$_{\pm0.00}$         & 0.04$_{\pm0.01}$  & \underline{3.73$_{\pm0.02}$}     & 3.14$_{\pm0.61}$     & \textbf{3.87$_{\pm0.04}$}  \\
\midrule
rel-hm                     & user-item-purchase    &  0.30  & 0.89  & 0.34$_{\pm0.03}$         & 0.82$_{\pm0.07}$  & \underline{2.78$_{\pm0.01}$}     & 2.76$_{\pm0.01}$     & \textbf{2.86$_{\pm0.01}$}  \\
\midrule
\multirow{2}{*}{rel-stack} & user-post-comment     &  0.02  & 1.42  & 0.04$_{\pm0.01}$         & 0.11$_{\pm0.03}$  & 12.98$_{\pm0.21}$    & \underline{13.54$_{\pm0.18}$}    & \textbf{13.58$_{\pm0.19}$} \\
\cmidrule(l){2-9}
                           & Post-post-related     &   1.46  & 1.74  &  1.98$_{\pm0.61}$         & 0.02$_{\pm0.00}$  & 10.81$_{\pm0.15}$    & \underline{11.16$_{\pm0.00}$}    & \textbf{11.48$_{\pm0.11}$} \\
\midrule
\multirow{2}{*}{rel-trial} & condition-sponsor-run &  2.52   & 8.42  & 4.53$_{\pm0.10}$         & 3.31$_{\pm0.16}$  & 10.24$_{\pm0.06}$    & \underline{11.05$_{\pm0.10}$}    & \textbf{11.20$_{\pm0.07}$} \\
\cmidrule(l){2-9}
                           & site-sponsor-run      &   3.75  & 17.31  & 9.63$_{\pm0.31}$         & 10.91$_{\pm0.54}$ & 18.91$_{\pm0.03}$    & \textbf{19.10$_{\pm0.09}$}    & \underline{19.09$_{\pm0.03}$}        \\
\bottomrule
\end{tabular}
\end{table}

\subsection{Ablation Study: Node or Edge}
\label{app_abl}
We investigate the impact of table roles in graph construction on the model’s learning capability by evaluating three variants of \method: \ding{172} modeling all tables as nodes; \ding{173} modeling all eligible tables as edges whenever possible; and \ding{174} using a random weight to balance the table-as-node and table-as-edge representations.

Table \ref{tab:abl} shows that different datasets exhibit distinct preferences for relational representations. For example, on the driver-top3 task of rel-f1, modeling tables as edges has better performance, whereas on the user-repeat task of rel-event, modeling tables as nodes is clearly more effective.

Moreover, although the Random strategy does not consistently outperform the pure Table-as-Node or Table-as-Edge designs, our adaptive structure learning approach achieves the best overall performance. This result highlights the necessity of optimizing graph structures for effective learning.
\begin{table}[htbp]
\centering
\caption{Ablation Study on the Roles of Table Modeling. The best results are highlighted in \textbf{bold}, and the second-best results are \underline{underlined}.}
\label{tab:abl}
\small
\setlength{\tabcolsep}{8pt}
\begin{tabular}{c|ll|cccc}
\toprule
\textbf{Metric}         & \textbf{Dataset} & \textbf{Task}      & \textbf{Table as Node} & 
\textbf{Table as Edge} & \textbf{Random}    & \textbf{FROG}                        \\
\midrule 
                        & rel-fl           & driver-top3        & 80.13$_{\pm0.17}$           & \underline{80.62$_{\pm0.91}$}     & 79.83$_{\pm0.71}$       & \textbf{82.73$_{\pm0.51}$}                \\
\cmidrule(l){2-7}
\multirow{-2}{*}{ROC-AUC (↑)} & rel-event        & user-repeat        & \underline{ 77.54$_{\pm0.13}$}     & 70.15$_{\pm0.30}$           & 76.08$_{\pm1.95}$       & \textbf{79.11$_{\pm0.87}$}                \\
\midrule
                        & rel-avito        & ad-ctr             & 0.0399$_{\pm0.0005}$        & \underline{ 0.0390$_{\pm0.0005}$}  & 0.0391$_{\pm0.0002}$    & \textbf{0.0385$_{\pm0.0002}$}             \\
\cmidrule(l){2-7}
\multirow{-2}{*}{MAE (↓)} & rel-trial        & site-success       & 0.3959$_{\pm0.0191}$   & 0.4112$_{\pm0.0077}$  & \underline{0.3655$_{\pm0.0252}$}            & \textbf{0.3504$_{\pm0.0222}$}             \\
\midrule
                        & rel-hm           & user-item-purchase & 2.81$_{\pm0.01}$    & \textbf{2.87$_{\pm0.01}$}      & 2.83$_{\pm0.01}$       & \underline{2.86$_{\pm0.01}$} \\
\cmidrule(l){2-7}
\multirow{-2}{*}{MAP (↑)} & rel-stack        & post-post-related  & 10.76$_{\pm0.13}$           & 10.77$_{\pm0.27}$           & \underline{ 11.04$_{\pm0.45}$} & \textbf{11.48$_{\pm0.11}$}          \\
\bottomrule
\end{tabular}
\end{table}

\subsection{Sensitivity Analysis of $\beta, \gamma$}
\label{app_sensi}
To evaluate the robustness of the proposed framework, we conduct a sensitivity analysis on the hyperparameters $\beta$ and $\gamma$, which govern the contributions of the $\mathcal L_{emb}$) and $\mathcal L_{pair}$, respectively.

\begin{figure}[htbp]
    \centering
    \includegraphics[width=\linewidth]{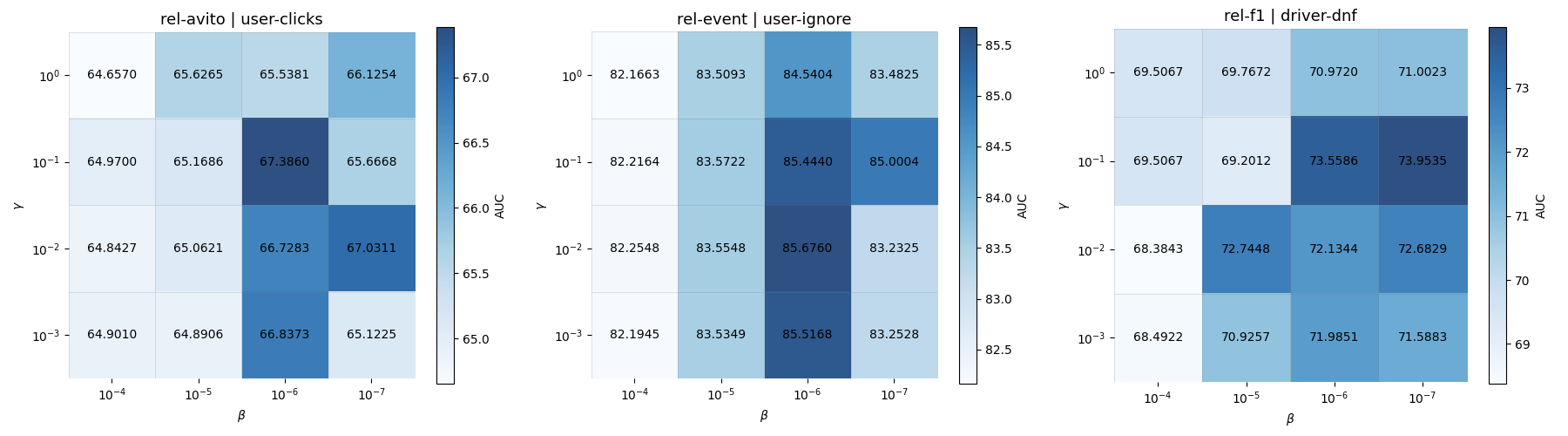}
    \caption{Hyperparameter Sensitivity Analysis of $\beta, \gamma$}
    \label{fig:hyper}
\end{figure}

As illustrated in Fig.~\ref{fig:hyper}, both $\mathcal L_{emb}$ and $\mathcal L_{pair}$ exhibit distinct unimodal peak behaviors across varying ranges of $\beta$ and $\gamma$.
An observation is the consistency of the effective hyperparameter intervals across diverse datasets. Specifically, peak performance is consistently achieved near $\beta \approx 10^{-6}$ and $\gamma \approx 0.1$, which are the values used in our experiments. This cross-dataset stability suggests that the properties captured by $\mathcal{L}_{FD}$ are broadly useful.

\subsection{Ablation Study: Role of $\mathcal L_{FD}$}
\label{app_L}
To investigate the effect of $\mathcal{L}_{FD}$, we conduct ablation studies by removing $\mathcal L_{emb}$ and $\mathcal L_{pair}$. 

The results in Table~\ref{tab:ablation_L} show performance degradation to varying degrees. The results indicate that both $\mathcal L_{emb}$ and $\mathcal L_{pair}$ have a significant impact, demonstrating the necessity of the FD constraint. 
\begin{table}[htbp]
\centering
\caption{Ablation study on removing $\mathcal{L}{emb}$ and $\mathcal{L}{pair}$.}
\label{tab:ablation_L}
\small
\setlength{\tabcolsep}{20pt}
\begin{tabular}{c|ccc}
\toprule
\textbf{Task} & \textbf{FROG} & \textbf{No $\mathcal L_{emb}$} & \textbf{No $\mathcal L_{pair}$} \\
\midrule

rel-avito $\mid$ user-clicks (AUC$\uparrow$)
& \textbf{67.25}
& 64.43 ($\downarrow$2.8)
& \underline{65.49} ($\downarrow$1.8) \\
\midrule

rel-event $\mid$ user-ignore (AUC$\uparrow$)
& \underline{85.43}
& 78.75 ($\downarrow$6.7)
& \textbf{85.69} ($\uparrow$0.3) \\
\midrule

rel-f1 $\mid$ driver-dnf (AUC$\uparrow$)
& \textbf{74.39}
& \underline{73.25} ($\downarrow$1.1)
& 72.23 ($\downarrow$2.2) \\
\midrule

rel-trial $\mid$ site-success (MAE$\downarrow$)
& \textbf{0.3564}
& 0.4012 ($\uparrow$0.04)
& \underline{0.3744} ($\uparrow$0.02) \\
\midrule

rel-stack $\mid$ post-post-related (MAP$\uparrow$)
& \textbf{11.44}
& \underline{10.90} ($\downarrow$0.5)
& 9.92 ($\downarrow$1.5) \\
\midrule

rel-hm $\mid$ user-item-purchase (MAP$\uparrow$)
& \textbf{2.87}
& \textbf{2.87} ($\downarrow$0.00)
& \underline{2.86} ($\downarrow$0.01) \\

\bottomrule
\end{tabular}
\end{table}

\subsection{Schema Graphs of the RDBs used in Section \ref{sec5-4}}
\label{app_SG}
To better evaluate the model’s ability to capture relational information, Section \ref{sec5-4} visualizes results on two datasets. The rel-event dataset contains multi-relations, while rel-avito exhibits richer relational structures among tables. These characteristics make the two datasets well suited for assessing \method’s relational learning capability.

Their RDB relationships are illustrated in Fig. \ref{fig:RDB}~\cite{relbench}:
\begin{figure}[htbp]
    \centering
    \begin{subfigure}{0.48\linewidth}
        \centering
        \includegraphics[width=\linewidth]{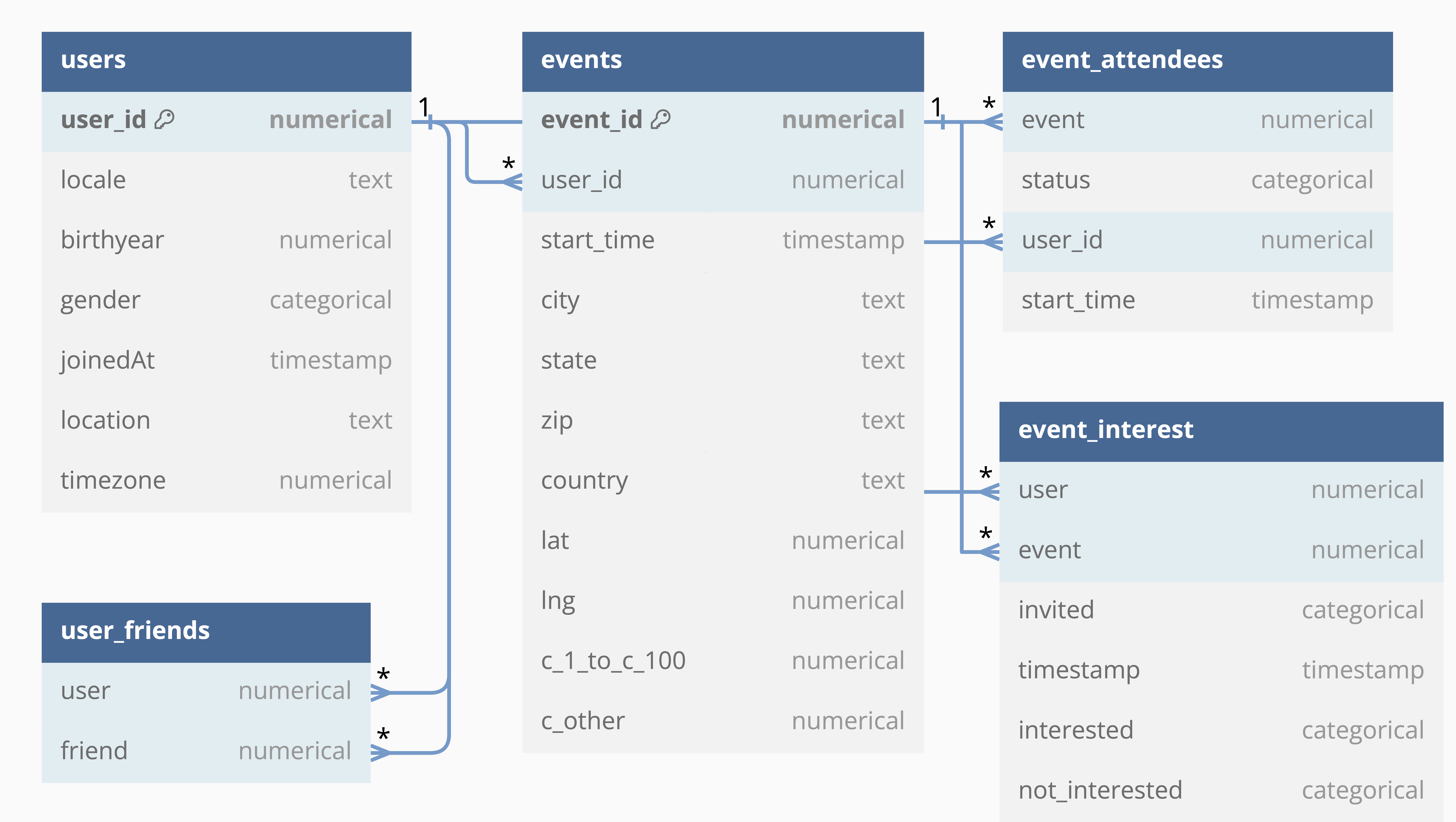}
        \caption{rel-event}
    \end{subfigure}
    \begin{subfigure}{0.48\linewidth}
        \centering
        \includegraphics[width=\linewidth]{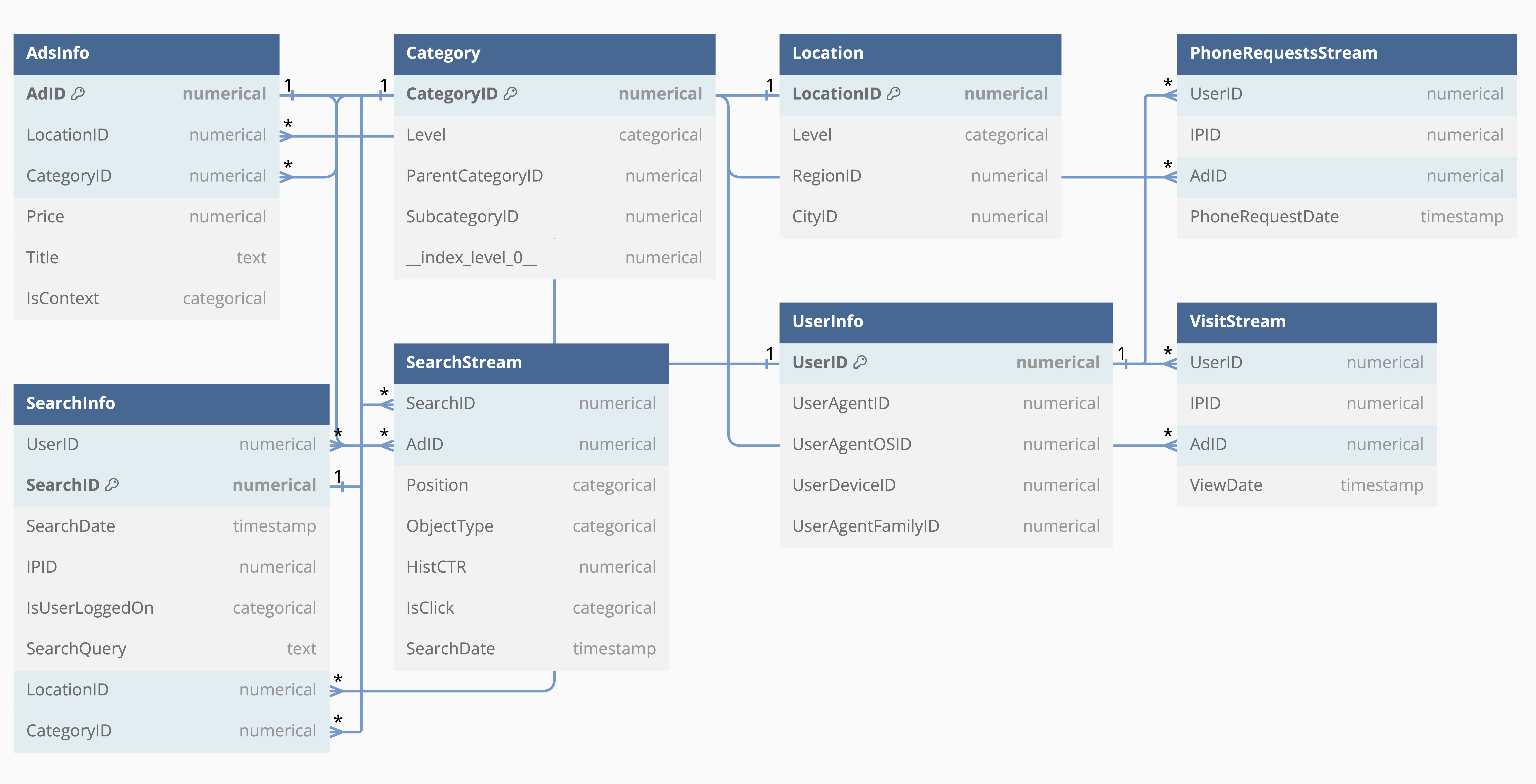}
        \caption{rel-avito}
    \end{subfigure}
    \caption{Visualization of rel-event and rel-avito}
    \label{fig:RDB}
\end{figure}

The corresponding schema graphs are shown in Fig. \ref{fig:schema}.
\begin{figure}[H]
    \centering
    \begin{subfigure}{0.35\linewidth}
        \centering
        \includegraphics[width=\linewidth]{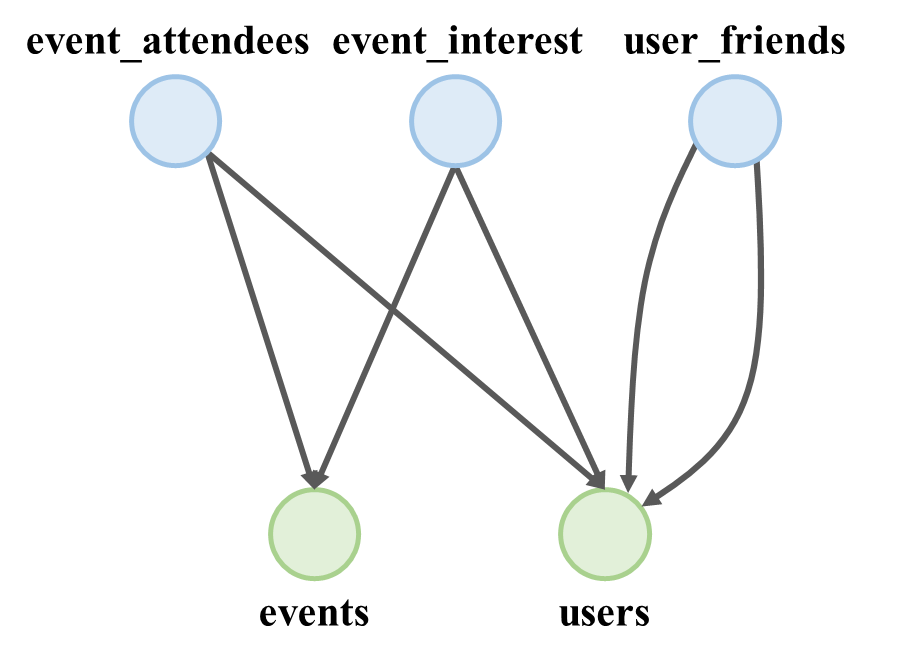}
        \caption{rel-event}
    \end{subfigure}
    \begin{subfigure}{0.35\linewidth}
        \centering
        \includegraphics[width=\linewidth]{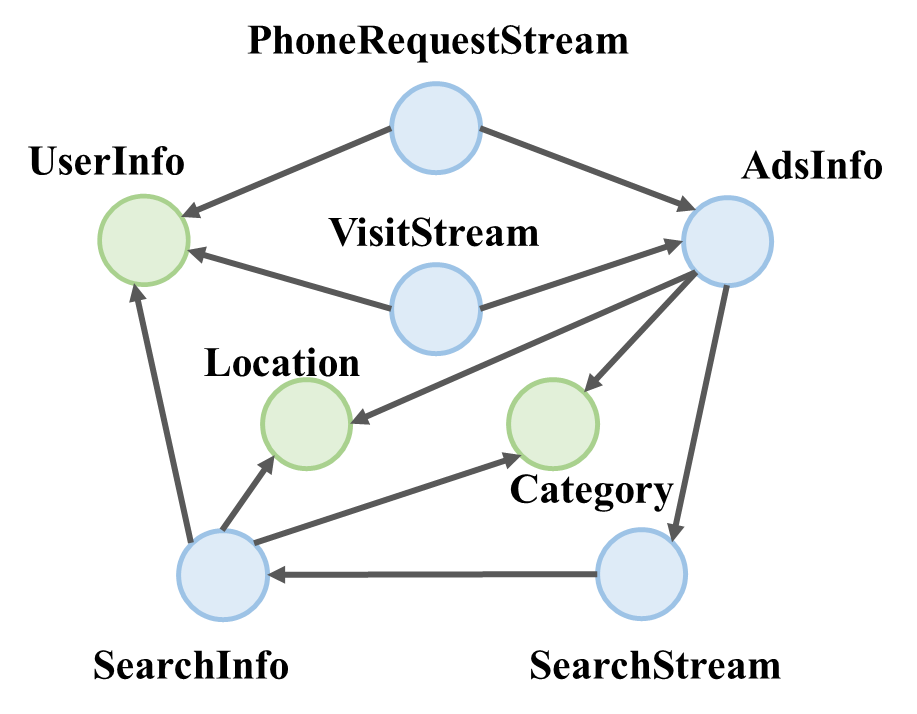}
        \caption{rel-avito}
    \end{subfigure}
    \caption{Schema Graphs of rel-event and rel-avito}
    \label{fig:schema}
\end{figure}

\subsection{Node or Edge? Analysis of Learned Graph Structures}
\label{app_GSL_my}
Our \method not only produces a trained model after training, but also obtains the optimized graph structure of the Schema graph. 
To investigate the roles of tables under different relations, we collected the following statistics for each dataset:
\begin{itemize}
\setlength{\itemsep}{0pt}
    \item The modeling roles of tables to which intermediate nodes belong in the Co-Occurrence paradigm.
    \item Whether the modeling roles of tables for intermediate nodes change across information propagation in two directions within the Co-Occurrence paradigm (\texttt{same} indicates consistent roles, \texttt{diff} indicates inconsistent roles).
    \item The modeling roles of tables to which intermediate nodes belong in the Completion paradigm.
\end{itemize}

The results are shown in Fig. \ref{rel-stat}:
\begin{itemize}
\setlength{\itemsep}{0pt}
    \item In the Co-Occurrence paradigm, intermediate nodes tend to be modeled as nodes for classification and regression tasks, while they are more naturally modeled as edges for link prediction tasks. 
    \item In the Co-Occurrence paradigm, across different directions of information propagation, tables tend to maintain \emph{consistent roles}. 
    \item In the Completion paradigm, the modeling preferences of tables vary across datasets, and the tendencies are generally more pronounced than in the Co-Occurrence paradigm. For instance, rel-trial favors table-as-node, while rel-event favors table-as-edge. The rel-hm dataset has been omitted here because such a paradigm does not exist in this dataset.
\end{itemize}

\begin{figure}[htbp]
    \centering
    \includegraphics[height=\textheight]{fig/all_rel2.png}
    \caption{Statistical analysis of the modeling methods of the table under different relationships}
    \label{rel-stat}
\end{figure}

\section{Hardware and Software Configurations}
We conduct the experiments with:
\begin{itemize}
\setlength{\itemsep}{0pt}
    \item Operating System: Ubuntu 22.04.4 LTS.
    \item CPU: Intel(R) Xeon(R) Silver 4110 CPU @ 2.10GHz.
    \item GPU: NVIDIA Tesla V100 SMX2 with 32GB of Memory.
\end{itemize}

\section{Limitation and Future Work}
\paragraph{Limitaion.}
Compared with GNN-based methods, \method has higher computational cost, mainly due to computing table-as-edge induced two-hop paths and increased parameter size. This overhead brings benefits on multi-hop tasks such as user-clicks of rel-avito (\ie, User → SearchInfo → SearchStream → Ad), but on simpler datasets like rel-hm (article ← transactions → customer) the gains are limited.
\paragraph{Future work.}
First, one possible improvement is to precompute candidate table-as-edge relations to avoid computation during training, thereby improving training efficiency. In addition, future work could explore adaptive mechanisms to apply table role modeling only when it is beneficial, in order to reduce unnecessary overhead on simpler tasks.

\end{document}